\documentclass{article}

\usepackage[preprint]{neurips_2026}

\usepackage[utf8]{inputenc} 
\usepackage[T1]{fontenc}    
\usepackage{hyperref}       
\usepackage{url}            
\usepackage{booktabs}       
\usepackage{amsfonts}       
\usepackage{nicefrac}       
\usepackage{microtype}      
\usepackage{xcolor}         

\usepackage{mathtools, dsfont, amsthm, array, xfrac, upquote}
\usepackage{enumitem}
\usepackage{wrapfig, caption, multirow}
\usepackage{algorithm, algpseudocode}
\usepackage{graphicx, subcaption}
\newtheorem{defn}{Definition}
\newtheorem{thm}{Theorem}
\newtheorem*{thm_}{Theorem}
\newtheorem{prop}{Proposition}
\newtheorem*{prop_}{Proposition}

\newtheorem{lem}{Lemma}
\newcommand{\grpo}{\textsc{GRPO}}
\newcommand{\name}{\textsc{POETS}}
\newcommand{\tosfit}{\textsc{ToSFiT}}

\newcommand{\esllm}{\textsc{EvoSearch (LLM)}}

\newcommand{\es}{\textsc{EvoSearch}}
\newcommand{\fibo}{\textsc{FIBO}}
\definecolor{Green}{RGB}{0, 100, 0}
\definecolor{Red}{RGB}{200, 0, 0}

\usepackage[most]{tcolorbox}
\newtcolorbox[auto counter]{takeaway}[1][]{%
  enhanced,
  breakable,
  colback=black!5,          
  colframe=black!85,        
  boxrule=1.25pt,
  arc=3pt,                  
  left=2mm,right=2mm,top=2mm,bottom=1.3mm,
  before skip=10pt, after skip=10pt,
  title={Takeaway~\thetcbcounter},
  colbacktitle=black!85,    
  coltitle=white,           
  fonttitle=\bfseries\small,
  attach boxed title to top left={yshift=-1.2mm, xshift=2mm},
  boxed title style={
    enhanced,
    arc=3pt,
    top=0.5mm, bottom=0.5mm, left=1mm, right=1mm,
    boxrule=0pt,           
    interior engine=empty, 
  },
  #1                        
}

\title{POETS: Uncertainty-Aware LLM Optimization\\ via Compute-Efficient Policy Ensembles}

\author{
Nicolas Menet$^{1,2}$\\
{\tt\small nicolas.menet@ibm.com}\\
\And
Andreas Krause$^{2}$\\
{\tt\small krausea@ethz.ch}\\
\And
Abbas Rahimi$^{1}$\\
{\tt\small abr@zurich.ibm.com}
\And
{
\normalfont $^{1}$IBM Research -- Zurich, $^{2}$Department of Computer Science, ETH Zürich}
}

\begin{document}

\maketitle

\begin{abstract}
  Balancing exploration and exploitation is a core challenge in sequential decision-making and black-box optimization. We introduce \name{} (\textbf{Po}licy \textbf{E}nsembles for \textbf{T}hompson \textbf{S}ampling), a novel framework that bridges uncertainty quantification and policy optimization. Our approach is grounded in the insight that policies trained with Kullback-Leibler (KL) regularization implicitly encode an underlying reward function. Building on this, \name{} bypasses the complex, nested process of training an uncertainty-aware reward model and separately fitting a policy to this model. Instead, we directly train a policy ensemble to capture epistemic uncertainty by matching implicitly encoded reward functions to online, bootstrapped data. To overcome the prohibitive compute and memory constraints of ensembling Large Language Models (LLMs), \name{} utilizes an efficient architecture: the ensemble shares a pre-trained backbone while maintaining diversity through independent Low-Rank Adaptation (LoRA) branches. Theoretically, we prove that \name{} implicitly conducts KL-regularized Thompson sampling and thus inherits strong cumulative regret bounds of ${\mathcal O}(\sqrt{T \gamma_T})$. Empirically, we demonstrate that \name{} achieves state-of-the-art sample efficiency across diverse scientific discovery domains, including protein search and quantum circuit design. Furthermore, it improves the optimization trajectories of reinforcement learning, proving particularly robust in off-policy settings with experience replay or in small dataset regimes.
\end{abstract}

\section{Introduction}


Efficient exploration in reinforcement learning and black-box optimization requires principled uncertainty quantification \citep{shahriari2015taking,garnett_bayesoptbook_2023}. Ensembling multiple independently trained models is a proven, model-agnostic method to estimate predictive uncertainty by treating the variation in their predictions as a proxy for epistemic uncertainty \citep{lakshminarayanan2017simple}. However, directly applying naive ensembling to Large Language Models (LLMs) is computationally prohibitive. Furthermore, existing exploration methods for LLMs often rely on explicit reward models~\citep{lightman2024lets, coste2024reward, menet2026thompson} or depend heavily on the instruction-following capabilities of the model itself~\citep{de2025simplifying, romera2024mathematical}.

To circumvent the complexities of explicit reward modeling (see Figure~\ref{fig:method}), this work builds upon a foundational duality: policies optimized with Kullback-Leibler (KL) regularization inherently encode their underlying reward functions~\citep{ziebart2010modeling, neu2017unified, haarnoja2018soft, geist2019theory}. Following \citep{rafailov2023direct, gao2024rebel, matrenok2025quantile}, we directly match the reward function encoded by the policy to the observed data rather than learning a reward function and fitting a policy to it in two separate steps. In contrast to prior work, we use policy ensembles diversified via Poisson bootstrapping to reliably capture epistemic uncertainty and drive exploration.  The resulting algorithm is called \textbf{Po}licy \textbf{E}nsembles for \textbf{T}hompson \textbf{S}ampling (\name{}).

To overcome the prohibitive compute and memory cost of a naive LLM ensemble, we implement a parameter-efficient ``Trunk \& Branch'' architecture~\citep{lee2015mheadsbetterone}. The vast majority of weights
\begin{wrapfigure}{r}{0.65\linewidth}
    \centering
    \vspace{-7.5pt}
    \begin{subfigure}[t]{0.4\linewidth}
        \includegraphics[width=\linewidth]{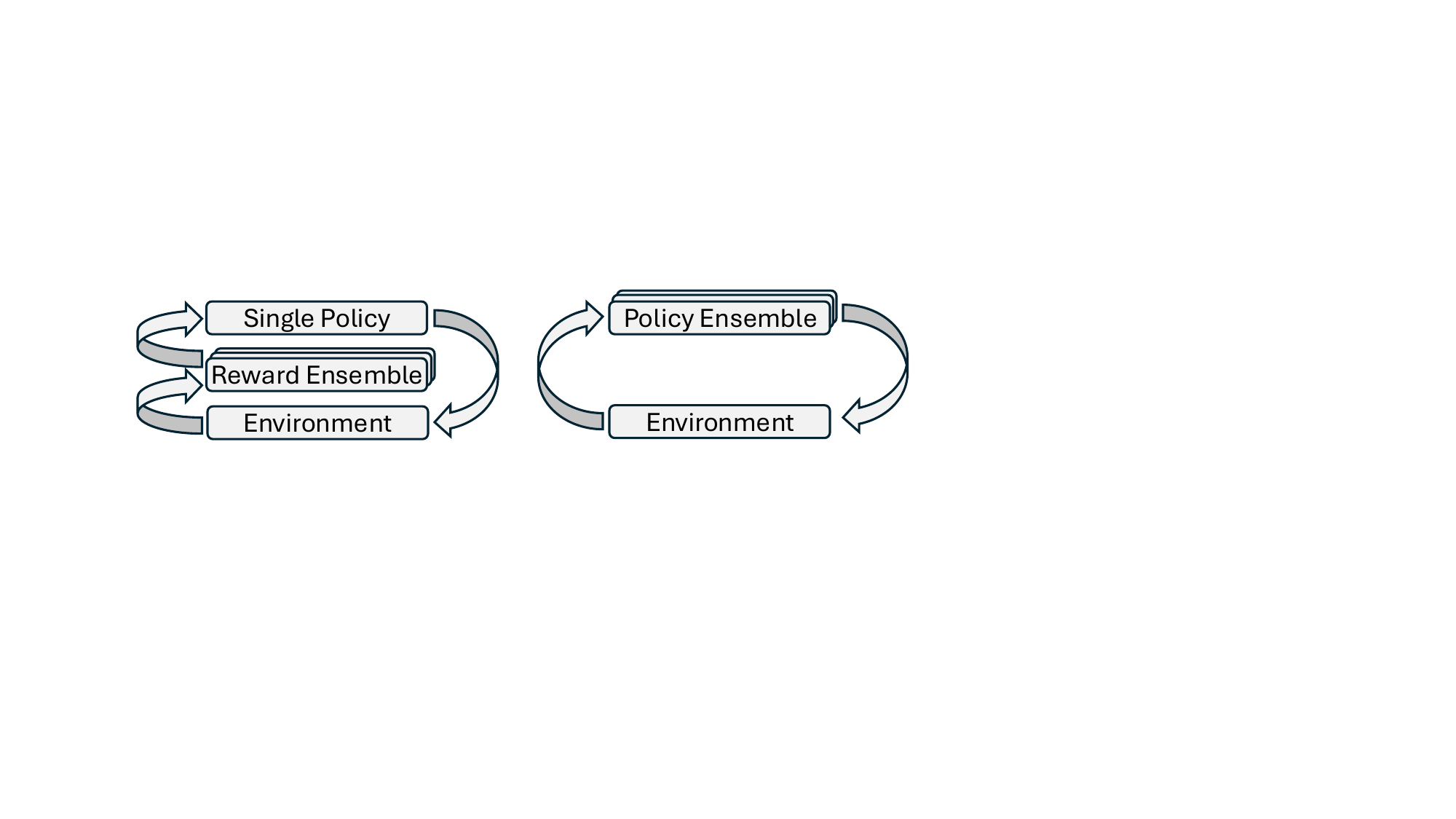}
    \end{subfigure}\hspace{20pt}
    \begin{subfigure}[t]{0.4\linewidth}
        \includegraphics[width=\linewidth]{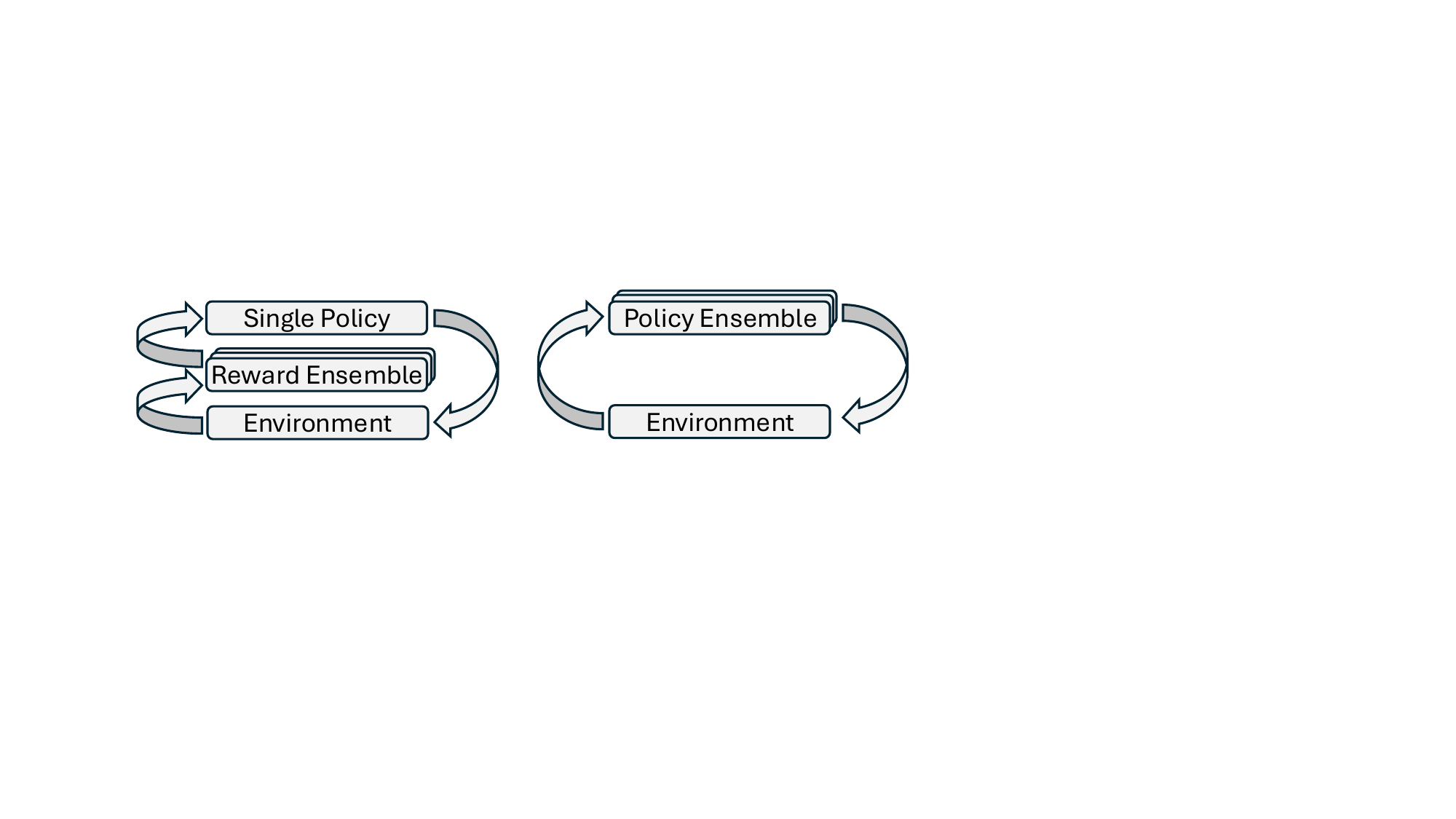}
    \end{subfigure}
    \caption{Previous methods for sample efficient RL fit an uncertainty-aware reward model and separately train a single policy to trade off exploration and exploitation (left). \name{} directly estimates reward uncertainty with policy ensembles (right).}
    \label{fig:method}
    \vspace{-15pt}
\end{wrapfigure}
are shared in a common backbone (evaluated only once per forward/backward pass), while ensemble diversity is effectively encapsulated in independent, low-rank adaptation (LoRA) modules~\citep{hu2022lora} acting as the final layers. Our contributions are as follows:

\begin{enumerate}
    \item We introduce \name{}, a novel framework that captures epistemic uncertainty via policy ensembles and implicitly conducts KL-regularized Thompson sampling~\citep{thompson1933likelihood, russo2018tutorial} without requiring an explicit probabilistic reward model.
    \item We theoretically prove that \name{} inherits the strong guarantees of KL-regularized Thompson sampling, achieving a cumulative soft regret bound of $O(\sqrt{T \gamma_T})$, where $\gamma_T$ describes the effective problem dimension via maximal information gain~\citep{srinivas2009gaussian}.
    \item We empirically demonstrate state-of-the-art sample efficiency using \name{} across diverse combinatorial black-box optimization domains, including protein/quantum circuit design.
    \item We demonstrate that \name{} fundamentally improves \grpo{}'s optimization trajectories, unlocking the effective use of experience replay without premature overfitting.
\end{enumerate}

\section{Preliminaries}

\paragraph{Notation}
We consider a contextual bandit~\citep{langford2007epoch} setting, which is the standard reinforcement learning paradigm for LLM post-training and LLM-driven scientific discovery. Let $\mathcal{X}$ denote the discrete space of contexts (e.g., user prompts) and $\mathcal{A}$ denote the space of actions (i.e., generated responses). A policy $\pi_\theta(a \mid x)$ parameterized by $\theta \in \mathbb R^{d}$ maps a context $x \in \mathcal{X}$ to a distribution over actions $a \in \mathcal{A}$. We assume access to a bounded reward function $r: \mathcal{X} \times \mathcal{A} \to \mathbb{R}$. For a fixed context $x$, we drop the explicit conditioning and write $r(a)$ and $\pi(a)$ for brevity. 

For a function $r:\mathcal{A} \to \mathbb{R}$ and a distribution $\rho$ over $\mathcal{A}$, we define the expected reward as $r(\rho) := \mathbb{E}_{a^\prime \sim \rho}[r(a^\prime)]$ and the centered reward as $r(a-\rho) := r(a)-r(\rho)$. We denote the Kullback-Leibler (KL) divergence between a policy and a reference policy as $D_{KL}(\pi \parallel \pi_{ref})$ and the Shannon entropy of a policy as $H[\pi]$. Finally, we consider an ensemble of $n$ independent policies, denoted as $\Pi := \{\pi_1, \dots, \pi_n\}$, with their uniform mixture distribution defined as $\pi_{avg}(a \mid x) := \frac{1}{n}\sum_{i=1}^n \pi_i(a \mid x)$.

\paragraph{Ensembling for uncertainty quantification}
Ensembling is a highly effective, model-agnostic approach for uncertainty quantification \citep{lakshminarayanan2017simple}. By training multiple models independently with different random initializations and data bootstraps, the variance across their predictions serves as a reliable proxy for epistemic uncertainty. Traditionally, bootstrapping generates diverse training sets by sampling with replacement from a fixed, offline dataset of size $s$, estimating the model variance resulting from finite data. However, standard bootstrapping requires a static dataset, making it ill-suited for the sequential, online data generation inherent to black-box optimization and reinforcement learning. This is further exacerbated by experience replay, which requires consistent bootstraps across time. To overcome this, we follow \cite{osband2016deep} and implement \textit{Poisson bootstrapping} \citep{oza2001online, qin2013efficientonlinebootstrappinglarge}. In standard bootstrapping, the number of times a specific data point is included in a resampled dataset follows a binomial distribution, which asymptotically converges to a Poisson distribution: $\mathrm{Bin}(s, 1/s) \to \mathrm{Poisson}(1)$ as $s \to \infty$. Poisson bootstrapping leverages this equivalence for online learning by immediately assigning each incoming data point an integer weight drawn independently from $\mathrm{Poisson}(1)$ for every ensemble member.

\paragraph{Regularized policies are implicit reward functions}
A foundational concept for our method is the duality between reward functions and optimal policies under regularized objectives \citep{ziebart2010modeling, haarnoja2018soft}. Consider the ``soft reward'' objective function regularized by Kullback-Leibler divergence and Shannon entropy:
\begin{equation*}
    J(\pi, r) := \mathbb{E}_{a \sim \pi}[r(a)] - \beta D_{KL}(\pi \parallel \pi_{ref}) + \alpha H[\pi].
\end{equation*}
The optimal policy that maximizes this objective takes a known closed form~\citep{geist2019theory}:
\begin{equation*}
    \pi^*(a) = \pi_{ref}(a)^{\frac{\beta}{\beta+\alpha}} \exp\Big(\frac{r(a)}{\beta+\alpha} - \log Z\Big).
\end{equation*}
By rearranging this expression, we see that any policy $\pi$ implicitly encodes a corresponding reward function $r_\pi$ for which it is optimal~\citep{rafailov2023direct}:
\begin{equation}
    r_\pi(a) := (\beta+\alpha) \log \pi(a) - \beta \log \pi_{ref}(a) + (\beta+\alpha) \log Z. \label{eq:policies_are_reward_functions}
\end{equation}
Here, the log-partition function $\log Z := \log \sum_{a} \pi_{ref}^{\sfrac{\beta}{\beta\!+\!\alpha}}(a) \exp\big(\frac{r(a)}{\beta+\alpha}\big)$ ensures the policy integrates to one. While $\log Z$ is notoriously intractable, it acts strictly as an action-independent constant. This reflects the fundamental property that shifting a reward function by a global additive constant does not alter the optimal policy, a feature we exploit in the next section to bypass estimating $Z$ entirely.

\section{Method}\label{sec:method}
The foundational principle of \name{} is to skip explicit reward modeling, see Figure~\ref{fig:method}. Instead of learning a separate ensemble of reward functions and fitting a policy to it that balances exploration with exploitation, we directly match the implicit reward functions encoded by a policy ensemble to the data. \looseness -1

Remarkably, translating this principle into a practical algorithm requires minimal overhead. As we derive below, optimizing this matching objective mathematically reduces to a policy gradient. Hence, \name{} is as simple to implement as standard \grpo{}~\citep{shao2024deepseekmath}, but operating over a shared ensemble and augmented with (online) data bootstrapping to capture epistemic uncertainty.

To formalize this, we first define the objective that aligns a single policy's implicit reward with the true reward and derive its gradients. Then, we introduce bootstrapping. All proofs are in Appendix~\ref{sec:proofs}.
\begin{defn}[\name{} Loss]
    Define $L_{poets}^\rho(a, \pi) := \big( r(a-\rho)-r_\pi(a-\rho)\big)^2$ 
    where $\rho$ is an arbitrary baseline distribution over actions that cancels out the intractable partition function $\log Z$.
\end{defn}
\begin{prop}[\name{} Gradients]\label{prop:poets_gradients} Define the soft reward $\tilde{r}_\pi(a) := r(a) + \beta \log \pi_{ref}(a) - (\beta+\alpha) \log \pi(a)$. Let $\pi$ be parameterized by $\theta$. Then the gradients of the \name{} loss $L_{poets}^\rho$ are given by
    \begin{align}
        \nabla_\theta L_{poets}^\rho(a, \pi) & = -2(\beta+\alpha) \cdot   \tilde{r}_\pi(a-\rho) (\nabla_\theta \log \pi(a) - \mathbb E_{a^\prime \sim \rho}[\nabla_\theta \log \pi(a^\prime)])\label{eq:poets_gradient}\\
        \mathbb E_{a \sim \rho} [\nabla_\theta L_{poets}^\rho(a, \pi)] & = -2(\beta+\alpha) \cdot \mathbb E_{a \sim \rho} \big[\tilde r_\pi(a-\rho) \nabla_\theta \log \pi(a)\big].\label{eq:poets_as_grpo}
    \end{align}
\end{prop}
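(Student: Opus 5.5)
The plan is a direct chain-rule computation. The first step is to make the centered implicit reward explicit using \eqref{eq:policies_are_reward_functions}. Since $(\beta+\alpha)\log Z$ does not depend on the action, it cancels in $r_\pi(a-\rho) = r_\pi(a) - \mathbb E_{a'\sim\rho}[r_\pi(a')]$, which leaves
\[
r_\pi(a-\rho) = (\beta+\alpha)\big(\log\pi(a) - \mathbb E_{a'\sim\rho}[\log\pi(a')]\big) - \beta\big(\log\pi_{ref}(a) - \mathbb E_{a'\sim\rho}[\log\pi_{ref}(a')]\big).
\]
Subtracting this from $r(a-\rho)$ and grouping terms, the residual inside the square is exactly $\tilde r_\pi(a) - \mathbb E_{a'\sim\rho}[\tilde r_\pi(a')] = \tilde r_\pi(a-\rho)$, where $\tilde r_\pi$ is the soft reward of the statement. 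Hence $L^\rho_{poets}(a,\pi) = \tilde r_\pi(a-\rho)^2$.

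The second step is to differentiate, treating $\rho$ as fixed and independent of $\theta$. This is the natural reading, since $\rho$ is an arbitrary baseline; if $\rho = \pi$ one applies a stop-gradient. The functions $r$ and $\pi_{ref}$ do not depend on $\theta$. Therefore $\nabla_\theta \tilde r_\pi(a-\rho) = -(\beta+\alpha)\big(\nabla_\theta\log\pi(a) - \mathbb E_{a'\sim\rho}[\nabla_\theta\log\pi(a')]\big)$, where the gradient is interchanged with the $\rho$-expectation; for discrete $\mathcal A$ and a fixed $\rho$ this is a finite or absolutely summable sum. Applying the chain rule to the square, $\nabla_\theta L = 2\tilde r_\pi(a-\rho)\nabla_\theta\tilde r_\pi(a-\rho)$, which gives \eqref{eq:poets_gradient}.

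The third step is to average \eqref{eq:poets_gradient} over $a\sim\rho$ and split the bracket into two terms. The term involving $\mathbb E_{a'\sim\rho}[\nabla_\theta\log\pi(a')]$ is constant in $a$, so its contribution factors as $\mathbb E_{a\sim\rho}[\tilde r_\pi(a-\rho)]\cdot \mathbb E_{a'\sim\rho}[\nabla_\theta\log\pi(a')]$. The first factor vanishes because a $\rho$-centered quantity has zero $\rho$-mean. Only $-2(\beta+\alpha)\mathbb E_{a\sim\rho}[\tilde r_\pi(a-\rho)\nabla_\theta\log\pi(a)]$ survives, which is \eqref{eq:poets_as_grpo}.

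No step is hard. The only points needing care are the convention that $\rho$ carries no $\theta$-dependence and the cancellation of $\log Z$ in the first step. The latter relies on the partition constant in \eqref{eq:policies_are_reward_functions} being action-independent, and on $\rho$ being a probability distribution so that constants are preserved under $\mathbb E_\rho$.
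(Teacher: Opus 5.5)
Your proposal is correct and takes essentially the same route as the paper: cancel $\log Z$ to identify the residual with $\tilde r_\pi(a-\rho)$, apply the chain rule, and eliminate the baseline term using $\mathbb E_{a\sim\rho}[\tilde r_\pi(a-\rho)]=0$. You also state explicitly that $\rho$ is held fixed (a stop-gradient), which the paper assumes only implicitly, and your third step holds for a general $\rho$ rather than only the empirical batch distribution the paper specializes to.
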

Note that if we choose $\rho$ to be the empirical distribution over entries in a batch sampled from the current policy, i.e., $\rho(a) = \tfrac{1}{G}\sum_{i=1}^G \delta_{a_i, a}$ for $a_i \sim \pi$, we obtain a surprisingly familiar result. Up to standardization of the rewards and a constant scaling factor of $2(\beta + \alpha)$, Equation~\eqref{eq:poets_as_grpo} is exactly the gradient of \grpo{}. Moreover, the standard policy gradient baseline \citep{kool2019buy, ahmadian2024back} used for variance reduction emerges naturally as the anchor across which reward differences are computed. Still, $\rho$ can also be off-policy, mirroring recent analysis of GRPO~\citep{yao2026grouprelative}.

\begin{takeaway}
    Standard \grpo{} is mathematically equivalent to optimizing a single-policy instance of the \name{} loss. By extending this implicit reward-matching objective to an ensemble, \name{} seamlessly integrates principled epistemic uncertainty quantification into the \grpo{} framework.
\end{takeaway}

\subsection{\name{} algorithm}
Transforming standard \grpo{} into \name{} requires only a few modifications to the training loop, summarized in Algorithm~\ref{alg:main}. Instead of updating a single policy, we maintain a lightweight policy ensemble $\Pi = \{\pi_1, \dots, \pi_n\}$. For each incoming batch of generated responses, we assign independent integer weights $w \sim \mathrm{Poisson}(1)$ to each response for every ensemble member. We then compute the \grpo{}-equivalent gradient (Equation~\eqref{eq:poets_as_grpo}) for each policy, scaled by these Poisson weights. 

\begin{algorithm}[t]
\caption{\name{} (for single context $x$)}
\label{alg:main}
\begin{algorithmic}[1]
\State \textbf{Input:} policies $\Pi=(\pi_i)_{i=1}^n$, group size $G$, buffer size $T$, regularizers $\alpha, \beta$, Optimizer $\mathrm{Upd}$
\While{not converged}
    \State Sample actions $(a_1, \ldots, a_G) \sim \pi_{avg}$ and evaluate rewards $(r_1, \ldots, r_G)$
    \State Sample Poisson weights $(w_1^i, \ldots, w_{G}^i) \sim \mathrm{Poisson}(1)$ for each policy $i \in \{1, \ldots, n\}$
    \State Add actions to $A_{buffer}$, rewards to $R_{buffer}$, and weights to $W_{buffer}$
    
    \For{$t=1, \ldots, T$} \Comment{Experience replay over the last $T$ batches (rollout groups)}
        \State Fetch historical batch $a, r, w = A_{buffer}[t], R_{buffer}[t], W_{buffer}[t]$
        \For{$i = 1, \ldots, n$}
            \State Compute soft rewards: $\tilde{r}_{\pi_i}(a_j) = r_j + \beta \log \pi_{ref}(a_j) - (\beta+\alpha) \log \pi_i(a_j)$
            \State Construct data bootstrap $\rho(a) \propto \sum_{j=1}^G w_j^i\cdot \delta_{a_j, a}$ as empirical distribution of batch
            \State Accumulate gradients $\mathbb E_{\rho}[\nabla_\theta L_{poets}^{\rho}]$ via bootstrapped policy gradient (e.g., \grpo{})
        \EndFor
        \State Update shared and branch parameters: $\theta \leftarrow \mathrm{Upd}(\theta, \nabla \theta)$
    \EndFor
\EndWhile
\end{algorithmic}
\end{algorithm}

In Algorithm~\ref{alg:main}, $\pi_{avg}(a\mid x) := \tfrac{1}{n} \sum_{i=1}^n \pi_i(a\mid x)$ is the combined ensemble policy. To sample from it, each element of the batch $a_j$ is assigned an index uniformly at random that identifies the policy $\pi_i$ from which it is sampled. $A_{buffer}$, $R_{buffer}$, $W_{buffer}$ are first-in-first-out buffers of size $T$. The entirety of parameters of the policy ensemble is denoted by $\theta = (\theta_{shared}, \theta_1, \ldots, \theta_n)$. Note that the ensemble can share parameters, a crucial property for computational and memory efficient ensembles.

\subsection{Efficient diversification via last-layer LoRA}\label{sec:trunk_and_branch_architecture}
A naive implementation of policy ensembles would require maintaining $n$ independent copies of an LLM, which is prohibitively expensive in terms of both memory and compute. Instead, we adopt a parameter-efficient ``Trunk \& Branch'' \citep{lee2015mheadsbetterone} architecture using Low-Rank Adaptation (LoRA) \citep{hu2022lora, wang2023loraensembleslargelanguage}. We decompose the overall parameters $\theta$ into a shared backbone $\theta_{shared}$ (the Trunk) and $n$ independent adapter modules $\theta_i = \{A_i, B_i\}_{i=1}^n$ (the Branches).

\textbf{Architecture.}
While $\theta_{shared}$ includes the vast majority of the weights, we encapsulate the ensemble diversity entirely through the final readout layer, relying on rich contextualized features~\citep{park2024the}. Note that both $\theta_{shared}$ and $\theta_i$ are fine-tuned on the data, so LoRA does not limit the capacity of policy fine-tuning. For a language head $h(x) = W x$, the $i$-th ensemble member computes:
\begin{equation*}
    h_i(x) = W x + B_i A_i x,
\end{equation*}
where $W \in \mathbb{R}^{d_v \times d_h}$ is a shared readout matrix, and $A_i \in \mathbb{R}^{r \times d_h}, B_i \in \mathbb{R}^{d_v \times r}$ are the low-rank decomposition matrices for branch $i$ with rank $r \ll d_h, d_v$. Maintaining $n$ branches adds only $n \times r\times (d_h + d_v)$ parameters to the $\Omega(l \times d_h^2 + d_h \times d_v)$ parameters spread across the $l$ layers of the trunk. Sampling from $\pi_{avg}$ comes at zero overhead, since for each rollout of batched generation only one policy (selected uniformly at random) is played. Moreover, because the shared trunk is executed exactly once per pass during training, the primary computational cost is completely decoupled from the ensemble size $n$. As we confirm empirically in Section~\ref{sec:black_box_compute_vs_sample_efficiency}, generating responses and accumulating gradients for $n$ policies takes virtually the same wall-clock time as a single policy.

\begin{takeaway}
    By capturing epistemic uncertainty via ensembling with lightweight last-layer LoRA branches, \name{} essentially matches the memory and computational footprint of standard \grpo{}.
\end{takeaway}

\section{Theory}\label{sec:theory}
Recall that because each policy inherently encodes a reward function (Equation~\eqref{eq:policies_are_reward_functions}), diversifying the policies via Poisson bootstrapping effectively creates an implicit reward ensemble~\citep{lakshminarayanan2017simple}. In the following, we formalize how \name{} leverages this property to implement a highly computationally efficient form of KL-regularized Thompson sampling.

\begin{defn}[KL-regularized Thompson sampling]
    Given a belief over rewards, KL-regularized Thompson sampling draws a reward function $r$ and then acts according to the policy $\pi$ satisfying
    \begin{equation}
        \pi(a) \propto \pi_{ref}(a)^{\frac{\beta}{\beta+\alpha}} \exp(\tfrac{r(a)}{\beta+\alpha}) \iff \pi = \arg\max_{\pi^\prime} J(\pi^\prime, r).\label{eq:kl_regularized_thompson_sampling}
    \end{equation}
\end{defn}
Note that, as the regularization coefficients $\alpha, \beta \to 0$, the policy concentrates entirely on the maximum reward, i.e., $\pi(a) \to \delta_{a, \arg\max_{a^\prime} r(a^\prime)}$, recovering standard unregularized Thompson sampling. 

Now, revisit Algorithm~\ref{alg:main}. By fitting an implicit reward ensemble $(r_{\pi_i})_{i=1}^n$ to the true rewards using independent Poisson bootstraps, \name{} learns an approximate Bayesian posterior over reward functions. Crucially, because each $\pi_i$ is, by definition, the analytical solution to Equation~\eqref{eq:kl_regularized_thompson_sampling} for its corresponding implicit reward $r_{\pi_i}$, executing KL-regularized Thompson sampling is as simple as uniformly drawing and playing a policy from the ensemble $\Pi = \{\pi_1, \dots, \pi_n\}$.

\begin{takeaway}
    \name{} implicitly conducts KL-regularized Thompson sampling by directly sampling from an uncertainty-aware policy ensemble, bypassing the need for an explicit belief over rewards.
\end{takeaway}

This connection is theoretically significant, since Thompson sampling is a provably efficient algorithm that naturally balances exploration and exploitation. In particular, under standard Bayesian assumptions---where the true reward is a realization of a Gaussian process prior, and the (implicit) reward ensemble perfectly captures the Bayesian posterior---KL-regularized Thompson sampling inherits the strong theoretical guarantees of standard Thompson sampling for the soft-reward objective.

\begin{thm}\label{thm:cumulative_regret_bound}
    Sequential KL-regularized Thompson sampling has Bayesian cumulative (soft) regret
    \begin{equation*}
        \textstyle \sum_{t=1}^T \mathbb E_{r,\pi_t} [\max_{\pi}J(\pi, r) - J(\pi_t, r)] 
        \in {\mathcal O}(\sqrt{T \gamma_T})
    \end{equation*}
    where $\gamma_T$ is the maximal information gain~\citep{srinivas2009gaussian} of a Gaussian process belief over $r$.
\end{thm}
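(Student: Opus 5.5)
We follow the Russo--Van Roy recipe for Bayesian regret of Thompson sampling and lift it to the soft objective. Fix a round $t$ with history $\mathcal H_t$. Let $\mu_t$ and $\sigma_t$ be the Gaussian process posterior mean and standard deviation of $r$, and let $\pi^\star_r := \arg\max_\pi J(\pi,r)$ be the soft-optimal policy for the true reward. Under KL-regularized Thompson sampling, $\pi_t = \pi^\star_{\tilde r_t}$ with $\tilde r_t$ drawn from the posterior. So, conditionally on $\mathcal H_t$, the pair $(r,\pi^\star_r)$ has the same law as $(\tilde r_t,\pi_t)$. This is the probability-matching property, and it holds because $\pi^\star$ is a deterministic function of the reward.

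\textbf{Regret decomposition.} Introduce an upper-confidence sequence $U_t(a) := \mu_{t-1}(a) + \sqrt{\beta_t}\,\sigma_{t-1}(a)$ and the soft functional $J(\pi,U_t)$. The key structural fact is that $J(\pi,r) - J(\pi,r') = \mathbb E_{a\sim\pi}[r(a)-r'(a)]$, since the KL and entropy terms cancel. This gives
\begin{equation*}
\mathbb E\big[J(\pi^\star_r,r) - J(\pi_t,r)\big] = \mathbb E\big[J(\pi^\star_r,r) - J(\pi^\star_r,U_t)\big] + \mathbb E\big[J(\pi_t,U_t) - J(\pi_t,r)\big],
\end{equation*}
where we used $\mathbb E[J(\pi^\star_r,U_t)\mid\mathcal H_t] = \mathbb E[J(\pi_t,U_t)\mid\mathcal H_t]$. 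This equality holds because $U_t$ is $\mathcal H_t$-measurable and $\pi^\star_r \stackrel{d}{=} \pi_t$ given $\mathcal H_t$. The first term equals $\mathbb E_{a\sim\pi^\star_r}[r(a)-U_t(a)]$. We bound it by the standard Gaussian tail argument, with $\beta_t = 2\log(|\mathcal A| t^2/\sqrt{2\pi})$ for finite $\mathcal A$, or via discretization otherwise. Summed over $t$, it contributes only $\mathcal O(1)$. The second term equals $\mathbb E_{a\sim\pi_t}[U_t(a)-r(a)]$, and its conditional expectation is $\sqrt{\beta_t}\,\mathbb E_{a_t\sim\pi_t}[\sigma_{t-1}(a_t)]$.

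\textbf{Summing.} We observe $a_t\sim\pi_t$ with Gaussian noise, so the remaining sum is $\sqrt{\beta_T}\sum_t \mathbb E[\sigma_{t-1}(a_t)]$. By Cauchy--Schwarz this is at most $\sqrt{\beta_T T\sum_t \sigma^2_{t-1}(a_t)}$. The information-gain lemma of \citet{srinivas2009gaussian} gives $\sum_t\sigma^2_{t-1}(a_t) \le C\gamma_T$ for bounded kernels. Together these yield $\mathcal O(\sqrt{T\beta_T\gamma_T})$, and the logarithmic factor $\beta_T$ is absorbed into the $\mathcal O$ (or $\tilde{\mathcal O}$) notation.

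\textbf{Main obstacle.} The hardest step is verifying that probability matching survives the regularization. Concretely, we need $\mathbb E[J(\pi^\star_r,U_t)\mid\mathcal H_t] = \mathbb E[J(\pi_t,U_t)\mid\mathcal H_t]$ even though $J(\cdot,U_t)$ contains the nonlinear KL and entropy terms. The plan handles this by noting that both sides apply the same deterministic map $\tilde r\mapsto J(\pi^\star_{\tilde r},U_t)$ to identically distributed rewards. We also need to check that the regularizers cancel exactly in each difference term, which follows from the linearity of $J$ in $r$. Everything else is the standard GP-TS analysis.
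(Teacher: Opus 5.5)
Your decomposition is valid up to its final sentence. The middle term vanishes because $U_t$ is history-measurable and $\pi^\star_r \overset{d}{=} \pi_t$, and the second term equals $\sqrt{\beta_t}\,\mathbb E[\sigma_{t-1}(a_t)\mid\mathcal H_t]$ because $r$ and $\pi_t$ are conditionally independent. The gap is the claim that $\beta_T$ can be absorbed into $\mathcal O(\cdot)$. Since $\beta_T = 2\log(|\mathcal A|T^2/\sqrt{2\pi})$ grows with $T$, you have actually proved $\mathcal O\big(\sqrt{T\gamma_T(\log|\mathcal A|+\log T)}\big)$, i.e.\ only the $\tilde{\mathcal O}(\sqrt{T\gamma_T})$ version. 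The theorem is strictly stronger: its detailed form has the $T$-independent constant $\sqrt{C_\eta}\,\big(1+\sqrt{2\log(2|\mathcal A|)+2}\big)$. This $\sqrt{\log T}$ is built into the confidence-sequence route, not an artefact of loose bookkeeping. The Gaussian tail bound on your first term gives $e^{-\beta_t/2}\sum_a\sigma_{t-1}(a)/\sqrt{2\pi}$ per round. That is a sum over \emph{all} actions rather than the played one, so the information gain does not control it. The width therefore has to grow like $\log(|\mathcal A|t^2)$ to make the first term summable, and that growth then reappears in the second term.

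The missing idea is to drop the upper-confidence sequence and pay for the union over actions only once, inside a second moment. The paper uses probability matching to replace $\mathbb E[J(\pi^\star_r,r)\mid\mathcal H]$ by $\mathbb E[J(\pi_t,\tilde r_t)\mid\mathcal H]$, with the regularizers cancelling as you note. This makes the per-round regret $\mathbb E[\sum_a\pi_t(a)(\tilde r_t(a)-r(a))\mid\mathcal H]$. Lemma~\ref{lem:uniform_bound} then bounds the error at the sample-dependent action $a_t$ by Cauchy--Schwarz, $\mathbb E[\sigma(a_t)\lvert Z_{a_t}\rvert]\le\sqrt{\mathbb E[\sigma^2(a_t)]\,\mathbb E[\max_a Z_a^2]}$, where $\mathbb E[\max_a Z_a^2]\le 2\log(2|\mathcal A|)+2$. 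The true-reward part contributes a further $\sqrt{\mathbb E[\sigma^2(a_t)]}$, since $r$ is independent of $a_t$ given the history. Your Jensen, Cauchy--Schwarz and information-gain finish then gives exactly $\mathcal O(\sqrt{T\gamma_T})$.

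Within your own framework the repair is just $U_t=\mu_{t-1}$. The second term then vanishes, and the first is at most $\sqrt{(2\log(2|\mathcal A|)+2)\,\mathbb E[\sigma^2_{t-1}(a^\star)]}$ for $a^\star\sim\pi^\star_r$. Probability matching turns $\mathbb E[\sigma^2_{t-1}(a^\star)\mid\mathcal H_t]$ into $\mathbb E[\sigma^2_{t-1}(a_t)\mid\mathcal H_t]$, which finishes the argument. The step you flag as the main obstacle is immediate (the paper handles it in one line); the real subtlety is where the logarithm enters. Like the paper's, this argument needs finite $\mathcal A$. Your appeal to discretization for infinite action spaces would need smoothness assumptions that are not part of the statement.
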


Because our policies optimize a regularized objective, the standard notion of cumulative regret is accordingly adapted. We bound performance using soft regret, which measures the difference in the regularized expected reward between the true optimal policy and the sampled policy at each timestep.

\section{Experiments}\label{sec:experiments}
We empirically evaluate \name{} on black-box optimization across three diverse domains for scientific discovery as well as in the contextual bandit setting of reinforcement learning via verifiable rewards. Throughout, we report the mean and standard error of each metric using $25$ random seeds. Our code is available at \url{supplementary_materials} and the experimental setup is detailed in Appendix~\ref{sec:experimental_details}. Additional results, including extensive ablations on all parts of our architecture, are in Appendix~\ref{sec:additional_results}.

\subsection{\name{} for black-box optimization}
We consider the \textit{best-seen reward} of a bandit without observation noise, defined as the maximum reward observed up to step $t$, i.e., $\max_{\tau\leq t} r({a_\tau})$. This metric captures the best solution found so far and is often used in black-box optimization benchmarks. Note the connection to simple regret, given by $\max_a r(a) - \max_{\tau\leq t} r({a_\tau})$. Within a collection of methods covering standard policy gradient, in-context and gradient-based Bayesian optimization, and evolutionary search, \name{} exhibits state-of-the-art sample efficiency and computational efficiency. Moreover, sample efficiency can be improved even more by increasing the computational effort per round of Bayesian optimization via replay buffers, a technique that we demonstrate fails without ensembling. Details are in Appendix~\ref{sec:experimental_details_pure_exploration}.

\subsubsection{Baselines}
We compare against four baselines. As a strong reinforcement learning baseline, we consider an improved version of \grpo{} \citep{shao2024deepseekmath} introduced by \cite{zheng2025groupsequencepolicyoptimization} and used to post-train models in the Qwen3 family \citep{yang2025qwen3technicalreport}. As alternative methods for implementing Thompson sampling with LLMs, we consider \tosfit{} \citep{menet2026thompson} and \fibo{} \citep{de2025simplifying, menet2026codingagentsenvironmentinteraction}. Whereas \tosfit{} fine-tunes LLMs toward a variational Thompson sampling policy based on a Gaussian process reward surrogate, \fibo{} approximate Thompson sampling in-context by providing the strongest candidate solutions thus far along with their scores to the LLM. Finally, \es{}~\citep{romera2024mathematical} prompts a language model to conduct search via evolutionary operators such as crossover and mutation of candidates. All baselines, as well as \name{}, operate in batched mode with batch size $16$. Table~\ref{tab:qualitative_method_comparison} provides a qualitative comparison.

\begin{table}[ht]
    \centering
    \caption{\name{} is the first method for black-box optimization that is aware of epistemic uncertainty while neither relying on explicit reward models nor on instruction-following capabilities of LLMs.}
    \vspace{5pt}
    \label{tab:methods}
    \begin{tabular}{l|ccccc}
    & \es{} & \fibo{} & \tosfit{} & \grpo{} & \name{} (ours) \\ \hline
    Uncertainty aware & \textcolor{Red}{no} & \textcolor{Green}{yes} & \textcolor{Green}{yes} & \textcolor{Red}{no} & \textcolor{Green}{yes} \\
    Instruction following agnostic & \textcolor{Red}{no} & \textcolor{Red}{no} & \textcolor{Green}{yes} & \textcolor{Green}{yes} & \textcolor{Green}{yes} \\
    Reward model free & \textcolor{Green}{yes} & \textcolor{Green}{yes} & \textcolor{Red}{no} & \textcolor{Green}{yes} & \textcolor{Green}{yes}
\end{tabular}
    \label{tab:qualitative_method_comparison}
\end{table}

\subsubsection{Settings}

\paragraph{FAQ refinement}
is a natural language task that tests the algorithm's ability to optimize text based on semantic alignment. We ask a Qwen3-8B model \citep{yang2025qwen3technicalreport} to write a frequently-asked-questions (FAQ) response. The reward is modeled as the alignment to an unknown ground-truth, judged by the Qwen3-Embedding-0.6B model \citep{qwen3embedding}. As a deep kernel for the \tosfit{} baseline, we adopt a linear kernel over the first 256 entries of the embeddings of Qwen3-Embedding-0.6B. The search space contains all token sequences, growing exponentially in the response length.

\paragraph{Protein search} explores the challenge of designing thermally stable proteins, a task with significant implications for drug development and industrial biotechnology. The goal is to identify amino acid sequences that exhibit high thermal stability, a property that enhances protein robustness and shelf life. Note that with 20 standard amino acids in the human body and sequence lengths of $100$ and above, the search space exceeds the number of atoms in the observable universe. We sample amino acid sequences from ProtGPT2 \citep{ferruz2022protgpt2} (0.738B parameters) and score them according to their negative thermal instability index \citep{guruprasad1990correlation}. Two baselines, \fibo{} and \esllm{}, require instruction-tuned models. There, we use the much larger and more expensive Qwen3-8B model instead. For \tosfit{}, we follow~\cite{menet2026thompson} and adopt a GP with linear kernel over the mean token embeddings from ProtGPT2 projected to the unit sphere.

\paragraph{Quantum circuit design}
is the task of designing quantum circuits that prepare low-energy quantum states in unknown environments. The challenge lies in navigating a vast, discrete space of valid quantum programs, where entanglement and gate structure critically influence performance. To generate 7-qubit Qiskit circuits \citep{javadiabhari2024quantumcomputingqiskit}, we use a Qwen3-8B model. As reward, we consider the negative energy of the prepared state under an unknown Hamiltonian with strong interaction terms, requiring entanglement for optimal performance. The feature map for \tosfit{} consists of a code validity bit as well as all two-qubit Pauli observables, which can be efficiently simulated using quantum computers. Note that for the Hamiltonian considered, the ground-truth reward is a linear map of these Gaussian process features, a setup tailored to the baseline \tosfit{}.

\begin{wrapfigure}{R}{0.32\linewidth}
    \centering
    \vspace{-11pt}
    \raisebox{20pt}{\rotatebox{90}{\footnotesize Generation Entropy}}
    \includegraphics[width=0.9\linewidth]{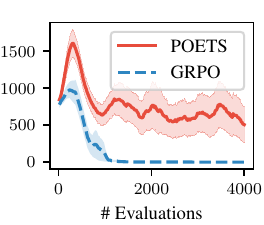}
    \caption{\name{} balances exploration with exploitation, leading to stable policy diversity. Shown on Protein Search, for other settings, see Section~\ref{sec:generation_diversity}.}
    \label{fig:diversity}
    \vspace{-20pt}
\end{wrapfigure}
\subsubsection{\name{} obtains state-of-the-art sample efficiency even without replay buffer}
\looseness -1 Consider Figure~\ref{fig:pure_exploration_best_seen_reward}, where the replay buffer is disabled by setting $T=1$ in Algorithm~\ref{alg:main}. \name{} is the only method to consistently obtain state-of-the-art sample efficiency across the highly diverse combinatorial tasks. Note that the baselines \es{} and \grpo{} conduct undirected exploration without any notion of \textit{optimism in the face of uncertainty}, i.e., an exploration bias guided by the reward potential of unexplored regions. As confirmed in Figure~\ref{fig:diversity}, the optimism of \name{} results in a more stable exploration-exploitation tradeoff than that of \grpo{}, thus managing to better retain generation diversity. 
\fibo{} and \tosfit{} are closest to \name{} in spirit, but the former relies on in-context learning instead of parameter updates and the latter requires an external reward model. As can be seen from the result, these requirements lead to less reliable performance than \name{}.

\begin{figure}[t]
\centering\includegraphics[width=\linewidth]{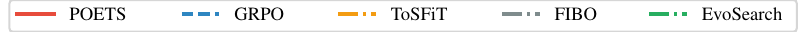}
    \raisebox{-78pt}{\rotatebox{90}{\footnotesize Best-Seen Reward}}\hfill
    \begin{subfigure}[t]{0.32\linewidth}
        \centering\hspace{20pt}\footnotesize FAQ Refinement
        \includegraphics[width=\linewidth]{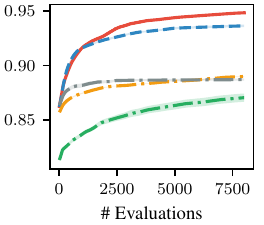}
    \end{subfigure}
    \begin{subfigure}[t]{0.32\linewidth}
        \centering\hspace{20pt}\footnotesize Protein Search
        \includegraphics[width=\linewidth]{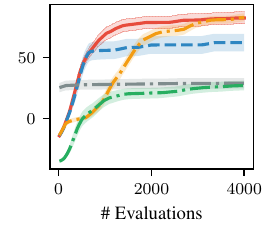}
    \end{subfigure}
    \begin{subfigure}[t]{0.32\linewidth}
        \centering\hspace{17pt}\footnotesize Quantum Circuit Design
        \includegraphics[width=\linewidth]{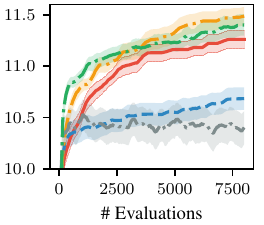}
    \end{subfigure}
    \caption{\name{} reaches SoTA sample efficiency across diverse tasks, even without a replay buffer.}
\label{fig:pure_exploration_best_seen_reward}
\end{figure}

\subsubsection{\name{} trades off compute efficiency with sample efficiency using a replay buffer}\label{sec:black_box_compute_vs_sample_efficiency}
Increasing the size of the replay buffer provides a substantial boost to sample efficiency. As demonstrated on the Protein Search task, setting the buffer size to $T>1$ in Algorithm~\ref{alg:main} yields significant improvements in both best-seen and expected reward (Figure~\ref{fig:pure_exploration_replay_buffer}). For results on quantum circuit design, see Appendix~\ref{sec:quantum_circuit_design_replay_buffer}. In contrast, standard \grpo{} is blind to epistemic uncertainty; rather than benefiting from replay, it overfits to early candidates. To understand the mechanism preventing \name{} from premature overfitting, we can track the normalized Jensen-Shannon divergence across the policy heads: $JSD(\pi_1, \ldots,\pi_n) := \frac{1}{n \log n}\sum_{i=1}^n D_{KL}(\pi_i \vert \vert \pi_{avg}) \in [0,1]$. As shown in the rightmost panel of Figure~\ref{fig:pure_exploration_replay_buffer}, the policies naturally diversify from their zero initialization. This diversity is maintained until convergence to the global optimum is achieved, at which point the policies align.
\begin{takeaway}
    \name{} reaches state-of-the-art sample efficiency for combinatorial black-box optimization.
\end{takeaway}

\begin{figure}[b]
\centering\includegraphics[width=\linewidth]{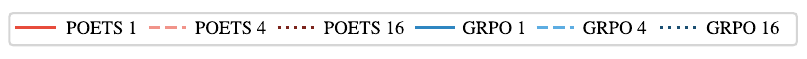}
    \begin{subfigure}[t]{0.32\linewidth}
        \centering\hspace{15pt}\scriptsize Best-Seen Reward on Protein Search\vspace{1.5pt}
        \includegraphics[width=\linewidth]{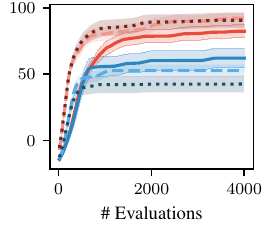}
    \end{subfigure}
    \begin{subfigure}[t]{0.32\linewidth}
        \centering\hspace{15pt}\scriptsize Expected Reward on Protein Search
        \includegraphics[width=\linewidth]{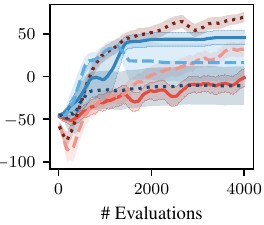}
    \end{subfigure}
    \begin{subfigure}[t]{0.32\linewidth}
        \centering\hspace{15pt}\scriptsize JS Divergence on Protein Search
        \includegraphics[width=\linewidth]{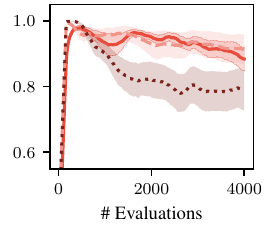}
    \end{subfigure}
    \caption{\name{} benefits substantially from a replay buffer ($T=4,16$). In contrast, \grpo{} overfits to suboptimal early solutions. After zero-initialization of LoRA, the policy heads diversify quickly.}
\label{fig:pure_exploration_replay_buffer}
\end{figure}

\subsection{\name{} for reinforcement learning}
Finally, we consider reinforcement learning with verifiable rewards (RLVR) \citep{lambert2025tulu} on the AIME dataset. In this setting, the environment provides a sparse, binary reward strictly based on whether the final answer generated within the \verb+\boxed{}+ tags matches the ground truth. Since the policy is trained to act for a distribution of questions, RLVR can be modeled as a contextual bandit problem. Adapting \name{} to this setting is straightforward: we simply add another loop over contexts in Algorithm~\ref{alg:main}. Figure~\ref{fig:reinforcement_learning_expected_reward} compares the optimization trajectories of \name{} and \grpo{} as a function of the number of generations. We evaluate both on-policy optimization (buffer size $T=1$) and off-policy optimization with experience replay ($T=4$). As shown, standard \grpo{} is highly prone to overfitting to suboptimal solutions when exposed to repeated contexts, both in multi-epoch training (Figure~\ref{fig:reinforcement_learning_expected_reward}, left panel) and when utilizing experience replay (dashed lines).

In contrast, \name{} successfully mitigates these failures by implicitly realizing Thompson sampling. Because the policy ensemble maintains a calibrated awareness of epistemic uncertainty, it naturally balances exploration and exploitation. This demonstrates that principled uncertainty quantification becomes essential as the risk of overfitting increases due to context repetition.
\begin{takeaway}
    \name{} consistently improves optimization trajectories of \grpo{} under context repetition.
\end{takeaway}

\begin{figure}[t]
\centering\includegraphics[width=\linewidth]{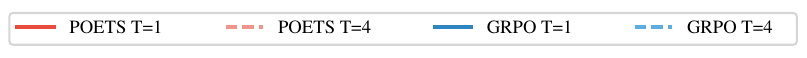}
    \raisebox{-70pt}{\rotatebox{90}{\footnotesize Reward (Accuracy)}}
    \begin{subfigure}[t]{0.3\linewidth}
        \centering\hspace{20pt}\footnotesize Training on AIME2026
        \includegraphics[width=\linewidth]{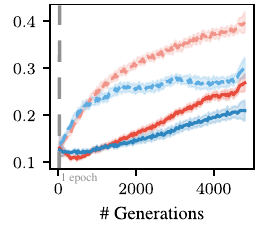}
    \end{subfigure}
    \raisebox{-70pt}{\rotatebox{90}{\footnotesize Reward (Accuracy)}}
    \begin{subfigure}[t]{0.3\linewidth}
        \centering\hspace{15pt}\footnotesize Training on AIME1983-2024
        \includegraphics[width=\linewidth]{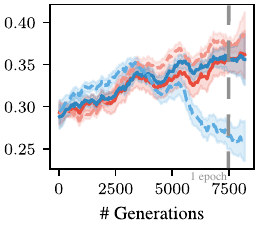}
    \end{subfigure}
    \raisebox{-80pt}{\rotatebox{90}{\footnotesize Accuracy on AIME2026}}
    \begin{subfigure}[t]{0.30\linewidth}
        \centering\hspace{15pt}\footnotesize Training on AIME1983-2024
        \includegraphics[width=\linewidth]{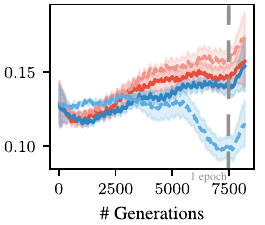}
    \end{subfigure}
    \caption{\name{} improves the sample efficiency and robustness of RL both on-policy and off-policy. Using a Qwen3-8B model, rollouts are generated in groups of $8$ for $4$ questions simultaneously.}
\label{fig:reinforcement_learning_expected_reward}
\end{figure}

\subsection{Computational overhead}\label{sec:computational_overhead}

Due to the low overhead of our last-layer ``Trunk \& Branch'' LoRA implementation, the benefits of \name{} come at minimal computational cost. As detailed in Table~\ref{tab:n_policies_vs_runtime}, training Qwen3-8B on AIME1983-2024 with a replay buffer of $T=4$ on a single NVIDIA H200 \textbf{introduces a mere $\mathbf{7.7\%}$ total runtime overhead} compared to standard \grpo{}. To cut costs further, note that the ensemble is already fully diversified and would achieve equivalent accuracy at $n=8$ (Figure~\ref{fig:ablate_n_ensembles} in Appendix~\ref{sec:additional_results}). 

\begin{table}[htbp]
\centering
\caption{Our Trunk \& Branch implementation adds only a small overhead compared to a single-policy.}
\vspace{5pt}
\resizebox{\linewidth}{!}{
\begin{tabular}{l|ccccc}
 & $n=1$ & $n=2$ & $n=4$ & $n=8$ & $n=16$ (default) \\
\hline
\rule{0pt}{10pt}Peak VRAM (GB) & $116.0$ & $116.8^{+0.7\%}$ & $118.0^{+1.7\%}$ & $124.9^{+7.7\%}$ & $135.5^{+16.8\%}$ \\
\hline
\rule{0pt}{10pt}AR Generation (s / round) & $\ \ 81.4$ & $\ \ 80.7^{-0.9\%}$ & $\ \ 81.4^{+0.0\%}$ & $\ \ 81.1^{-0.4\%}$ & $\ \ 81.0^{-0.5\%}$ \\

\rule{0pt}{10pt}Parallel Training (s / round) & $\ \ 36.6$ & $\ \ 37.2^{+1.6\%}$ & $\ \ 38.6^{+5.5\%}$ & $\ \ \ \, 41.1^{+12.3\%}$ & $\ \ \ \, 46.1^{+26.0\%}$ \\
\hline
\begin{tabular}{@{}l@{}}\rule{0pt}{9pt}AR Generation + \\ Parallel Training (s / round)\end{tabular} & $118.0$ & $117.9^{+0.0\%}$ & $120.0^{+1.7\%}$ & $122.2^{+3.6\%}$ & $127.1^{+7.7\%}$
\end{tabular}
}
\label{tab:n_policies_vs_runtime}
\end{table}

\section{Related work}

\subsection{Large language models for optimization and scientific discovery}
Applying LLMs to complex search spaces like protein design \citep{ferruz2022protgpt2, lin2023evolutionary} and quantum circuit synthesis \citep{javadiabhari2024quantumcomputingqiskit, vishwakarma2024qiskit} is a rapidly growing area \citep{kristiadi2024sober, rankovic2025gollum}. Existing approaches leverage LLMs as evolutionary operators \citep{romera2024mathematical}, or implement Thompson sampling \citep{thompson1933likelihood, russo2018tutorial} either via in-context conditioning (\fibo{}) \citep{de2025simplifying, menet2026codingagentsenvironmentinteraction} or via fine-tuning based on an explicit reward model (\tosfit{}) \citep{menet2026thompson}. Unlike these methods, \name{} does not rely on fragile in-context instruction following or additional external reward models, allowing it to fully rely on domain-specific pre-trained policies.

\subsection{Policy gradient methods for large language model post-training}
\grpo{} \citep{shao2024deepseekmath} and its variants \citep{zheng2025groupsequencepolicyoptimization, liu2025understanding, yu2025dapo} have emerged as strong, critic-free RL baselines for post-training LLMs \citep{deepseekai2025deepseekv3technicalreport, guo_2025, yang2025qwen3technicalreport}. However, standard policy gradient methods \citep{williams1992simple, schulman2017proximal} such as \grpo{} conduct undirected exploration and are blind to epistemic uncertainty. \name{} introduces a simple structural modification to policy gradient methods that captures this uncertainty, stabilizing the exploration-exploitation tradeoff without any explicit reward model.

\subsection{Ensembles for uncertainty quantification and targeted exploration}
Deep ensembles are a popular method for uncertainty quantification \citep{lakshminarayanan2017simple, yang2026rewarduqunifiedframeworkuncertaintyaware}, previously scaled to Thompson sampling by fitting independent reward models to bootstrapped data \citep{eckles2014thompsonsamplingonlinebootstrap, osband2016deep, russo2018tutorial}. Instead of reward ensembles, policy ensembles have also been considered in prior works, where diversity is induced via additional loss terms~\citep{yang2022applicablereinforcementlearningimproving}, e.g., based on Stein discrepancies \citep{liu2017steinvariationalpolicygradient}, determinantal diversity~\citep{parker2020effective}, or mutual information~\citep{eysenbach2018diversity}. Alternative uncertainty mechanisms such as dropout-based uncertainty \citep{gal2016dropout} are computationally convenient but provide a weak proxy for epistemic uncertainty, failing to exhibit the desired progression from high initial uncertainty to confident convergence \citep{osband2016deep}. \looseness -1

Prior methods for policy ensembling force exploration indiscriminately, wasting compute on known low-reward regions. In contrast, \name{} grounds policy diversity strictly in epistemic uncertainty via online Poisson bootstrapping \citep{osband2016deep}. Beyond these algorithmic differences, standard policy ensembles require fully independent networks, which is computationally prohibitive for LLMs. By combining a shared backbone with independent Low-Rank Adaptation \citep{hu2022lora} heads, \name{} captures the diversity required for effective exploration \citep{russo2014learning, russo2016information} while barely exceeding the memory and compute footprint of a single policy.

\section{Conclusion}\label{sec:conclusion_and_limitations}
In this work, we introduce \name{} (\textbf{Po}licy \textbf{E}nsembles for \textbf{T}hompson \textbf{S}ampling), a framework that enables principled, uncertainty-aware exploration for scientific discovery and reinforcement learning. Based on a well-established duality between regularized policies and implicit reward functions, \name{} conducts Thompson sampling without any explicit reward belief. It thus sidesteps the nested optimization procedure associated with separately training a reward model and an acting policy.

To overcome the prohibitive computational cost of ensembling LLMs, we deploy a ``Trunk \& Branch'' architecture. By sharing the pre-trained backbone and confining ensemble diversity to independent last-layer LoRA adapters, \name{} achieves the exploration benefits of deep ensembles while maintaining a memory and computational footprint comparable to standard \grpo{}.

Theoretically, we demonstrate that by implicitly conducting KL-regularized Thompson sampling, \name{} inherits a cumulative soft regret bound of ${\mathcal O}(\sqrt{T\gamma_T})$, known to be optimal for Bayesian optimization~\citep{scarlett17a}. Empirically, \name{} demonstrates state-of-the-art sample efficiency across diverse black-box optimization tasks, including FAQ Refinement, Protein Search, and Quantum Circuit Design. Furthermore, in reinforcement learning settings, \name{} balances the exploration-exploitation tradeoff to maintain generation diversity and navigate unknown but promising regions. This approach improves the sample efficiency of \grpo{} in both on-policy and off-policy settings, and enables the effective use of experience replay without premature overfitting.

\textbf{Future Work:} Future research directions include extending \name{} from the contextual bandit setting of LLM post-training and scientific discovery to multi-step settings with fine-grained credit assignment. By offering a scalable and model-agnostic approach to exploration, \name{} provides a foundation for applying LLMs to complex sequential decision-making and scientific discovery.


\bibliographystyle{plainnat}
\bibliography{bibliography}

\appendix\newpage

\section{Additional experiments and ablations}\label{sec:additional_results}

\subsection{Replay buffer for quantum circuit design}\label{sec:quantum_circuit_design_replay_buffer}
\begin{wrapfigure}{R}{0.5\linewidth}
    \centering

    \vspace{-53pt}
    \raisebox{-20pt}{\rotatebox{90}{\footnotesize Best-Seen Reward}}
    \begin{minipage}{0.9\linewidth}
    \centering\hspace{25pt}\footnotesize Qwen3-8B on Quantum Circuit Design
    \includegraphics[width=\linewidth]{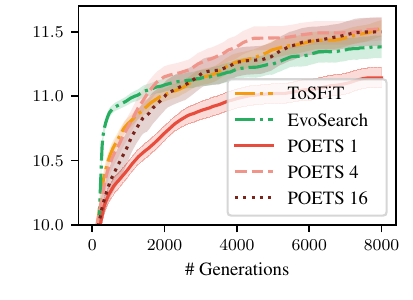}
    \end{minipage}
    \caption{Increasing the replay buffer results in SoTA sample efficiency for quantum circuit design.}
    \label{fig:quantum_circuit_design_replay_buffer}
    \vspace{-25pt}
\end{wrapfigure}
Recall from Section~\ref{sec:black_box_compute_vs_sample_efficiency} that increasing the replay buffer provides consistent improvements in sample efficiency on the protein search task. We now repeat this experiment for the task of quantum circuit design. As can be seen in Figure~\ref{fig:quantum_circuit_design_replay_buffer}, given a replay buffer of size $4$ or $16$, \name{} recovers the SoTA sample-efficiency of \tosfit{}. Note that \tosfit{} leverages a kernel specifically handcrafted for this task, i.e., the ground-truth reward function is linear in the features. In contrast, \name{} is completely task agnostic, i.e., does not need any task-dependent engineering.

\subsection{Policy ensembling ensures generation diversity in black-box optimization}\label{sec:generation_diversity}
A longstanding challenge of policy gradient methods is entropy collapse~\citep{mei2020global}. By considering epistemic uncertainty, \name{} is naturally robust to such degeneracies, see Figure~\ref{fig:pure_exploration_entropy_comparison}. Notably, \name{} does not rely on strong entropy regularization~\citep{mnih2016asynchronous}, which affects the reward landscape and thus changes the optimal policy~\citep{haarnoja2018soft, ahmed2019understanding}.

\begin{figure}[ht]
    \centering
    \raisebox{-73pt}{\rotatebox{90}{\footnotesize Generation Entropy}}\hfill
    \begin{subfigure}[t]{0.32\linewidth}
    \centering\hspace{20pt}\footnotesize FAQ Refinement
        \includegraphics[width=\linewidth]{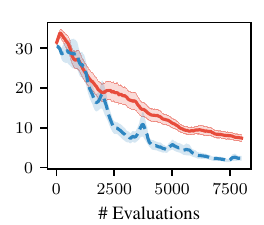}
    \end{subfigure}
    \begin{subfigure}[t]{0.32\linewidth}
        \centering\hspace{20pt}\footnotesize Protein Search
        \includegraphics[width=\linewidth]{graphics/protein_entropy.pdf}
    \end{subfigure}
    \begin{subfigure}[t]{0.32\linewidth}
        \centering\hspace{20pt}\footnotesize Quantum Circuit Design
        \includegraphics[width=\linewidth]{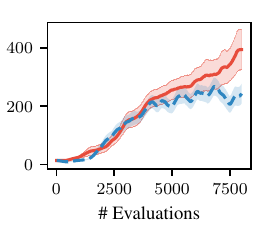}
    \end{subfigure}
    \caption{\name{} does not suffer from the diversity collapse of \grpo{}. Note that we have added entropy regularization to the quantum circuit design task, since the starting entropy is extremely low.}
    \label{fig:pure_exploration_entropy_comparison}
\end{figure}

\subsection{The policy ensemble diversifies effectively despite parameter sharing}
Independent ensembles are known to diversify naturally~\citep{lakshminarayanan2017simple}. However, given the trend in up-scaling LLMs, running a forward/backward pass for $n$ independent models would be prohibitive both in terms of compute and memory. Our proposed solution in Section~\ref{sec:trunk_and_branch_architecture} is to use a single LLM as a shared backbone, and then construct a policy ensemble through LoRA head adapters. While computationally efficient, the tying of weights between separate policies in the ensemble risks diversity collapse across ensemble members. As a remedy, we propose to use separate learning rates for the shared backbone and the LoRA heads. Consider our ablation on the LoRA head learning rate in Figure~\ref{fig:ablation_fast_weights} using Qwen3-8B with $T=4$, where the backbone learning rate was selected via grid-search to $2.5\times 10^{-6}$ while keeping the ensemble learning rate fixed to $0$ (see Section~\ref{sec:experimental_details_reinforcement_learning}). 

\begin{figure}[h]
    \centering
    \raisebox{-105pt}{\rotatebox{90}{\footnotesize Jensen-Shannon Divergence}}
    \begin{subfigure}[t]{0.45\linewidth}
    \centering\hspace{15pt}\footnotesize Qwen3-8B Trained on AIME1983-2024
        \includegraphics[width=\linewidth]{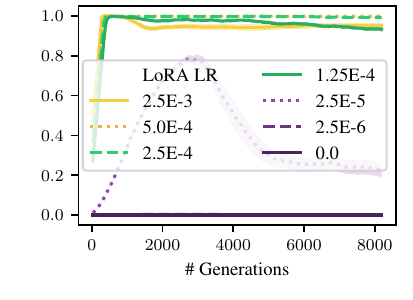}
    \end{subfigure}
    \hfill
    \raisebox{-105pt}{\rotatebox{90}{\footnotesize Accuracy on AIME2026}}
    \begin{subfigure}[t]{0.46\linewidth}
        \centering\hspace{20pt}\footnotesize Qwen3-8B Trained on AIME1983-2024
        \includegraphics[width=\linewidth]{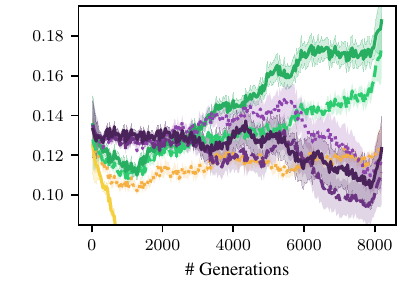}
    \end{subfigure}
    \caption{By selecting the correct learning rate for the LoRA heads of \name{}, the policy ensemble is effectively diversified despite adopting a shared backbone (the trunk) for training efficiency.}
    \label{fig:ablation_fast_weights}
\end{figure}

We remark that the best performing configuration has the property that the (normalized) Jensen-Shannon divergence across the policy ensemble reaches a maximum value of $1$ quickly (describing zero probability mass overlap), but then slowly decays, thereby allowing the ensemble to gradually converge to a single optimal policy. This sweet spot of quick diversification followed by slow convergence consistently leads to the best trajectories in all experiments. As such, LoRA learning rates that strike a good balance between exploration and exploitation can be predicted early in training.

\begin{wrapfigure}{R}{0.42\linewidth}
    \centering
    \vspace{-5pt}

    \raisebox{-30pt}{\rotatebox{90}{\footnotesize Accuracy on AIME2026}}
    \begin{minipage}{0.9\linewidth}
    \centering\hspace{20pt}\footnotesize Qwen3-8B Trained on AIME1983-2024
    \includegraphics[width=\linewidth]{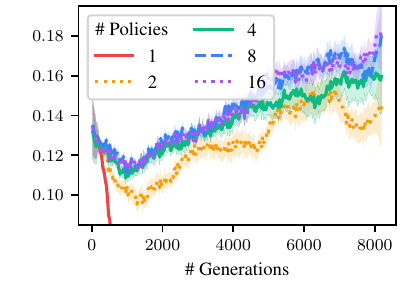}
    \end{minipage}
    \caption{Increasing the ensemble size results in improved performance. At $8$ members, the ensemble is sufficiently diverse.}
    \label{fig:ablate_n_ensembles}
    \vspace{-10pt}
\end{wrapfigure}

\newpage\subsection{Small ensembles suffice to effectively capture the reward posterior}
In Figure~\ref{fig:ablate_n_ensembles}, we ablate the number of policy heads $n$ that make up the ensemble. Recall that each policy implicitly encodes a reward function, as such, the size of the ensemble directly affects the fidelity of the probabilistic reward model. While larger policy ensembles improve performance, the marginal improvement vanishes for ensembles larger than $8$. Note that we consistently report on ensemble size $16$ for all other experiments.

Setting the maximum generation length to $2048$ tokens, autoregressively generating from $\pi_{avg}$ $8$ rollouts for each of the $4$ task descriptions requires $81.1$ seconds. Note that, according to Algorithm~\ref{alg:main}, rollout cost does not depend on the ensemble size. In contrast, training does. However, using our efficient ``Trunk \& Branch'' implementation, the computational overhead is minor at $7.7\%$ in total. Table~\ref{tab:n_policies_vs_runtime} provides detailed measurements.

\begin{figure}[b]
    \centering
    \raisebox{-105pt}{\rotatebox{90}{\footnotesize Jensen-Shannon Divergence}}
    \begin{subfigure}[t]{0.45\linewidth}
        \centering\hspace{20pt}\footnotesize Qwen3-8B Trained on AIME1983-2024
        \includegraphics[width=\linewidth]{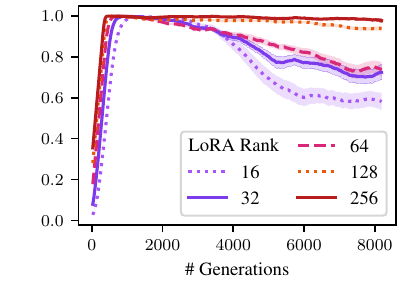}
    \end{subfigure}\hfill
    \raisebox{-100pt}{\rotatebox{90}{\footnotesize Accuracy on AIME2026}}
    \begin{subfigure}[t]{0.45\linewidth}
        \centering\hspace{20pt}\footnotesize Qwen3-8B Trained on AIME1983-2024
        \includegraphics[width=\linewidth]{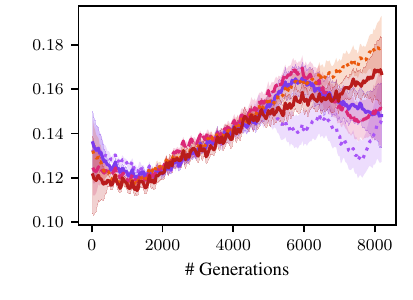}
    \end{subfigure}
    \caption{Increasing the rank of the last-layer LoRA heads increases the diversity of the policy ensemble and thus improves the optimization trajectory. Here, the replay buffer is set to $T=4$.}
    \label{fig:ablate_on_lora_rank}
\end{figure}

\subsection{Large LoRA ranks improve ensemble diversification}
Next, we consider the impact of the LoRA rank on the policy ensemble diversity and thus the optimization trajectory. As can be seen in Figure~\ref{fig:ablate_on_lora_rank}, increasing the LoRA rank from 16 to 256 provides consistent gains on the ensemble diversity, as measured by the Jensen-Shannon divergence. Moreover, the best optimization trajectories are achieved by ranks of $128$ and $256$, demonstrating the utility of large ranks. Note that in the rest of the paper, we consistently adopt the LoRA rank $128$, which only results in a small computational overhead compared to single-policy~\grpo{}, see Table~\ref{tab:n_policies_vs_runtime}.

\begin{wrapfigure}{R}{0.45\linewidth}
    \centering

    \vspace{-8pt}
    \raisebox{-33pt}{\rotatebox{90}{\footnotesize Jensen-Shannon Divergence}}
    \begin{minipage}{0.9\linewidth}
    \centering\hspace{25pt}\footnotesize ProtGPT2 on Protein Search
    \includegraphics[width=\linewidth]{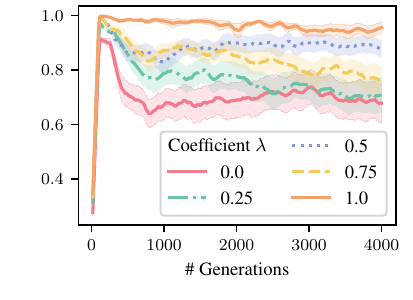}
    \end{minipage}
    \caption{Bootstrapping ensures a diversified ensemble that captures epistemic uncertainty.}
    \label{fig:ablate_bootstrapping}
    \vspace{-15pt}
\end{wrapfigure}

\subsection{Bootstrapping contributes to policy diversity}
In Figure~\ref{fig:ablate_bootstrapping}, we investigate whether the Poisson bootstrapping of Algorithm~\ref{alg:main} is really required to ensure a diversified policy ensemble. To that end, we ablate the Poisson weights $w_1^i, \ldots, w_B^i \sim \mathrm{Poisson}(1)$ drawn independently for each policy $i \in \{1, \ldots, n\}$. Note that if $w_j^i = 1\ \forall i,j$, no bootstrapping would occur and all policy heads would see the same data. In that case, diversification would rely entirely on the random initialization of the $A_i$ matrices ($B_i$ is initialized to zero). In our ablation, we transform the Poisson weights via $(w_j^i)^\prime \gets \lambda w_j^i + 1 - \lambda$. In other words, we take a convex combination between bootstrapping ($\lambda = 1$) and no bootstrapping ($\lambda = 0$). As Figure~\ref{fig:ablate_bootstrapping} demonstrates, bootstrapping is a key component that ensures a diversified policy ensemble.

\subsection{KL and entropy regularization cannot achieve the targeted exploration of \name{}}
Finally, we ablate the hyper-parameters $\alpha$ and $\beta$ on the quantum circuit design task using a Qwen3-8B. Recall that $\alpha$ is the coefficient of entropy regularization $H[\pi]$ and $\beta$ the coefficient of KL-divergence $D_{KL}(\pi \parallel \pi_{ref})$. KL and entropy regularization are a staple of reinforcement learning used to prevent policy collapse. However, neither take into account epistemic uncertainty, i.e., a natural progression of large initial uncertainty for exploration followed by convergence to the optimal policy for exploitation. As we demonstrate in Figure~\ref{fig:ablate_on_alpha_beta}, \name{} introduces a calibrated and adaptive exploration-exploitation tradeoff that cannot be achieved via Kullback-Leibler or entropy regularization. However, such targeted exploration is precisely needed to efficiently identify the global optimum in black-box optimization tasks.

\begin{figure}[h]
    \centering
    \begin{subfigure}[t]{0.49\linewidth}
        \raisebox{50pt}{\rotatebox{90}{\footnotesize Best-Seen Reward}}\includegraphics[width=\linewidth]{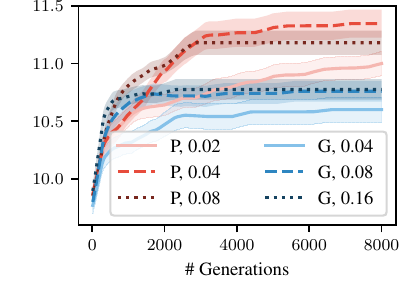}
        \subcaption[]{P and G denote \name{} and \grpo{}. We ablate the entropy regularization coefficient $\alpha$ for both methods.}
    \end{subfigure}\hfill
    \begin{subfigure}[t]{0.48\linewidth}
        \raisebox{50pt}{\rotatebox{90}{\footnotesize Best-Seen Reward}}\includegraphics[width=\linewidth]{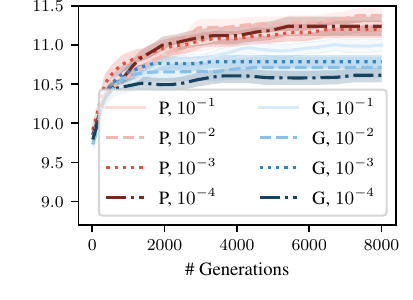}
        \subcaption[]{P and G denote \name{} and \grpo{}. We ablate the Kullback-Leibler regularization coefficient $\beta$.}
    \end{subfigure}
    \caption{Entropy/KL-regularization do not achieve the well-calibrated exploration-exploitation tradeoff inherent to \name{}. This is expected due to its Thompson sampling nature, see Section~\ref{sec:theory}.}
    \label{fig:ablate_on_alpha_beta}
\end{figure}

\newpage
\section{Experimental details}\label{sec:experimental_details}
To ensure reproducibility, we describe the setup for each of the experiments in Section~\ref{sec:experiments}. Our entire code base is publicly available at \url{supplementary_materials}, including extensive experimental configuration files and an environment file specifying all Python packages and their versions. Our implementation is based on Pytorch~\citep{paszke2019pytorchimperativestylehighperformance} and Huggingface Transformers~\citep{wolf2020huggingfacestransformersstateoftheartnatural}.

To ensure statistical significance, we run all experiments for 25 seeds and report on the mean and standard error. Each experiment was conducted on a single NVIDIA H200 GPU (141GB). The total cost of running all experiments, including preliminary hyper-parameter tuning that did not make it into the paper, amounts to $57\ 000$ GPU hours.

Because the large memory footprint of the logits would make memory bandwidth a primary bottleneck, we chunk the readout layer along the sequence dimension to maximize cache efficiency. By processing smaller blocks, specifically, sizes of 256 for black-box optimization and 512 for reinforcement learning, we keep the data within the cache and drastically reduce slow memory transfers. The strictness of this bandwidth limitation is evident in our quantum circuit design experiments, where using larger chunks of 512 resulted in a slight slowdown due to cache spillage.

\subsection{\name{} for black-box optimization}\label{sec:experimental_details_pure_exploration}
Across all tasks and methods, batched black-box optimization is conducted by generating solutions in groups of size $G=16$. We consistently adopt an ensemble size of $n=16$, with a LoRA rank of $r=128$ for the branches. The LoRA $B$ matrices are initialized to zero, meaning the ensemble members initially behave identically to the shared backbone before diversifying during training via Poisson bootstrapping. \grpo{} is given one LoRA branch to fine-tune separately from the backbone.

To implement \name{}, we bootstrap the data given to the \grpo{}~\citep{shao2024deepseekmath} objective (more precisely the variant proposed by \cite{zheng2025groupsequencepolicyoptimization}) with a clipping parameter of $\epsilon=0.2$. For \name{}, we add the KL-regularization terms to the computation of the baseline to ensure consistency with Proposition~\ref{prop:poets_gradients}. The shared backbone and the ensemble branches are optimized using the AdamW optimizer with $\beta_1=0.9$ and $\beta_2=0.999$ and no weight decay. Given that generation in these tasks requires broad exploration, the generation temperature is uniformly set to $1.0$ across all methods. We retain the same learning rates independently of the size of the training buffer.

The learning rate of both the backbone and the ensembles is always tuned for \grpo{} and carried over zero-shot to \name{} except for the FAQ response refinement task, where we double the ensemble learning rate to ensure sufficient ensemble diversity. The learning rate of the last-layer LoRA branches can typically be set much larger than that of the backbone while retaining stability. An additional benefit of this decomposition into fast-weights and slow-weights \citep{kahneman2011thinking} is that the shared backbone cannot easily bypass the diversity introduced from bootstrapping the ensemble.

For each specific task, the problem setting dictates the prompt template, system instructions, and precise hyperparameter tuning:

\paragraph{FAQ refinement} We optimize a Qwen3-8B model with the system prompt set to \texttt{"You are a helpful assistant."} to generate responses up to 512 tokens. Thinking mode is disabled and the prompt given to the model for generation is: 
\begin{verbatim}    Write an FAQ response to the question "How do I reset my password?".\end{verbatim} The reward function consists of the text similarity (cosine similarity score) between the generated response and a hidden ground-truth target text, computed using the \texttt{Qwen3-Embedding-0.6B} model.

As a black-box reward function, we consider the alignment to the unknown ground-truth response \textit{To reset your password, go to the login page and click the “Forgot Password” link. Enter your registered email address, and we'll send you instructions to create a new password. Make sure to check your spam folder if you don’t see the email within a few minutes.}, computed through the cosine similarity between the Qwen3-Embedding-0.6B~\citep{qwen3embedding} embeddings of the candidate and the ground-truth.

To avoid premature entropy collapse of the \grpo{} baseline, we set the KL-divergence coefficient for all gradient-based methods to $\beta=1.0 \times 10^{-4}$ against the base reference model set to the LLM at round $0$, with no additional entropy regularization ($\alpha=0.0$). The learning rates are $1.0\times 10^{-6}$ for the shared backbone and $2.0 \times 10^{-3}/1.0\times 10^{-3}$ for the ensemble branches (\name{}/\grpo{}).

\paragraph{Protein search} We optimize the ProtGPT2 model (0.738B parameters), starting with an empty generation prompt (\textit{<|endoftext|>}) to freely sample amino acid sequences up to 512 tokens. The objective is to maximize thermal stability, estimated via the negative thermal instability index \citep{guruprasad1990correlation}. Once again, we apply a KL-penalty of $\beta=1.0\times 10^{-4}$ without entropy regularization ($\alpha=0.0$). Learning rates are elevated for this smaller architecture: $1.0 \times 10^{-4}$ for the shared backbone and $1.0 \times 10^{-2}$ for the ensemble branches.

\paragraph{Quantum circuit design} We use a Qwen3-8B model with system prompt set to \texttt{"You are a helpful assistant."} to generate quantum circuits with up to 512 tokens of Qiskit~\citep{vishwakarma2024qiskit}. Thinking is disabled and the model is presented with the following task prompt: \begin{verbatim}    Define a 7-qubit Qiskit quantum circuit that can be called with qc =
    quantum_circuit(). Include all imports and enclose your code block with
    triple backticks in markdown format (```).
\end{verbatim} The reward corresponds to the simulated (negative) energy of the state prepared by the generated circuit under a complex, highly entangled synthetic Pauli Hamiltonian. If the candidate code contains invalid syntax or fails to run, it receives a harsh minimal reward.

Because the model has very low initial entropy, small weight updates can easily introduce entropy collapse. Thus, we apply an explicit entropy regularization penalty of $\alpha=0.02$, alongside a KL-penalty of $\beta=1.0\times 10^{-3}$ across all gradient-based methods. The learning rates are $1.0\times 10^{-6}$ for the shared backbone and $5.0 \times 10^{-4}$ for the ensemble branches.

\begin{table}[h]
\renewcommand{\arraystretch}{1.2}
\resizebox{\linewidth}{!}{
\begin{tabular}{lcccccc}
    \multirow{2}{*}{Hyperparameter} & \multicolumn{2}{c}{FAQ Refinement} & \multicolumn{2}{c}{Protein Search} & \multicolumn{2}{c}{Quantum Circuit Design} \\
    \cmidrule(lr){2-3} \cmidrule(lr){4-5} \cmidrule(lr){6-7}
    & \name{} & \grpo{} & \name{} & \grpo{} & \name{} & \grpo{} \\
    \cmidrule(lr) {1-1} \cmidrule(lr){2-3} \cmidrule(lr){4-5} \cmidrule(lr){6-7}
    Group Size & 16 & 16 & 16 & 16 & 16 & 16 \\
    GRPO Variant & GSPO & GSPO & GSPO & GSPO & GSPO & GSPO \\
    Importance Clipping & 0.2 & 0.2 & 0.2 & 0.2 & 0.2 & 0.2 \\
    \cmidrule(lr) {1-1} \cmidrule(lr){2-3} \cmidrule(lr){4-5} \cmidrule(lr){6-7}
    Backbone LR & $10^{-6}$ & $10^{-6}$ & $ 10^{-4}$ & $ 10^{-4}$ & $ 10^{-6}$ & $ 10^{-6}$ \\
    LoRA LR & $2 \!\times\! 10^{-3}$ & $ 10^{-3}$ & $ 10^{-2}$ & $ 10^{-2}$ & $5.0 \!\times\! 10^{-4}$ & $5.0 \!\times\! 10^{-4}$ \\
    AdamW $\beta_1$ & 0.9 & 0.9 & 0.9 & 0.9 & 0.9 & 0.9 \\
    AdamW $\beta_2$ & 0.999 & 0.999 & 0.999 & 0.999 & 0.999 & 0.999\\ 
    \cmidrule(lr) {1-1} \cmidrule(lr){2-3} \cmidrule(lr){4-5} \cmidrule(lr){6-7}
    Weight Decay & 0 & 0 & 0 & 0 & 0 & 0 \\
    KL-Regularization $\beta$ & $10^{-4}$ & $10^{-4}$ & $10^{-4}$ & $10^{-4}$ & $10^{-3}$ & $10^{-3}$ \\
    Entropy Regularization $\alpha$ & 0 & 0 & 0 & 0 & 0.02 & 0.02 \\
    \cmidrule(lr) {1-1} \cmidrule(lr){2-3} \cmidrule(lr){4-5} \cmidrule(lr){6-7}
    Poisson Bootstrapping & True & False & True & False & True & False\\
    Ensemble Size & 16 & 1 & 16 & 1 & 16 & 1 \\
    LoRA Rank & 128 & 128 & 128 & 128 & 128 & 128 \\
    \cmidrule(lr) {1-1} \cmidrule(lr){2-3} \cmidrule(lr){4-5} \cmidrule(lr){6-7}
    Temperature & 1.0 & 1.0 & 1.0 & 1.0 & 1.0 & 1.0 \\
    Max Output Tokens & 512 & 512 & 512 & 512 & 512 & 512 \\
\end{tabular}
}
\vspace{5pt}
\caption{Hyperparameters for black-box optimization. The learning rates are tuned for \grpo{} and then directly transferred to \name{} except for FAQ Refinement, where the LoRA LR was doubled to increase diversification. We keep a fixed learning rate for all replay buffer sizes. For FAQ refinement, we swept the backbone LR across $1\!\times\!10^{-6}$, $2  \!\times\! 10^{-6}$, $5 \!\times\! 10^{-6}$, $1 \!\times\! 10^{-5}$, $2 \!\times\! 10^{-5}$ and the LoRA LR across $5 \!\times\! 10^{-4}$, $1 \!\times\! 10^{-3}$, $2 \!\times\! 10^{-3}$. For protein search, we swept the backbone LR across $5 \!\times\! 10^{-5}, 1 \!\times\! 10^{-4}, 2 \!\times\! 10^{-4}$ and the LoRA LR across $5 \!\times\! 10^{-3}$, $1 \!\times\! 10^{-2}$, $2 \!\times\! 10^{-2}$. For quantum circuit design, we swept the backbone learning rate across $5 \!\times\! 10^{-7}, 1 \!\times\! 10^{-6}, 2 \!\times\! 10^{-6}$ and the LoRA LR across $2 \!\times\! 10^{-4}, 5 \!\times\! 10^{-4}, 1 \!\times\! 10^{-3}, 2 \!\times\! 10^{-3}$. Appendix~\ref{sec:additional_results} ablates other hyper-parameters.}
\label{tab:sweep_hyperparameters_blackbox}
\end{table}

\subsubsection{Additional baselines}

\textbf{\fibo{}}~\citep{de2025simplifying, menet2026codingagentsenvironmentinteraction} implements Thompson sampling fully in context by attending to a history of best-scoring previously evaluated candidate solutions and their associated rewards in a chat format. The memory size of the history is set to $128$ for FAQ Refinement and Protein Search, but reduced to $64$ for Quantum Circuit Design to prevent out-of-memory errors. Thinking is disabled and in-context Bayesian optimization is induced via the following system prompt: 
\begin{verbatim}
    You are conducting Bayesian optimization fully in context. You are 
    provided a history of candidates with associated rewards in a chat 
    format. Propose a new candidate that might maximize the reward. Your
    novel solution must be enclosed using markdown (...). Never repeat a
    previous solution. Your search space is over [TASK SPECIFIC SPACE]. \end{verbatim} 
The history is initially seeded with an example response (for FAQ and Protein Search) or an empty quantum circuit (for Quantum Circuit Design).

\textbf{\es{}}~\citep{romera2024mathematical} performs evolutionary search in context through instruction following. It maintains a population of size $128$ and selects two parent candidates in each round via tournament selection with a tournament size of $3$. The language model acts as the crossover and mutation operator prompted by: 
\begin{verbatim}
    You are conducting evolutionary search in context. You are provided two
    candidates. Propose a new distinct candidate by combining the two
    provided candidates and mutating the result. Your novel candidate must 
    be enclosed using markdown (...). Never repeat previous candidates. 
    Your search space is over [TASK SPECIFIC SPACE].
\end{verbatim}
Chain-of-thought reasoning is disabled. Similar to \fibo{}, the initial population is seeded with an example response or an empty circuit to provide the desired format to the model.

\textbf{\tosfit{}}~\citep{menet2026thompson} fine-tunes the language model toward a variational Thompson sampling policy based on a Gaussian process reward surrogate. Across all experiments, an exploration bonus of $4.0$ is applied to the prior amplitude inferred via marginal likelihood maximization, and the noise-to-amplitude ratio is set to $0.01$. The marginal likelihood maximization uses $4$ warmup steps. For the Gaussian process feature map, FAQ Refinement uses the primary 256 dimensions of the \texttt{Qwen3-Embedding-0.6B}~\citep{qwen3embedding} model, Protein Search uses the mean token embeddings from ProtGPT2 projected to the unit sphere (1280 dimensions), and Quantum Circuit Design uses 211 two-qubit Pauli observables. Following \grpo{} and \name{}, we apply a KL-penalty of $\beta=1.0\times 10^{-4}$ for FAQ and Protein Search, and $\beta=1.0\times 10^{-3}$ for Quantum Circuit Design. The learning rates are tuned individually, reaching $1.0\times 10^{-6}$, $1.0\times 10^{-4}$, and $2.0 \times 10^{-6}$, respectively.

\subsection{\name{} for reinforcement learning}\label{sec:experimental_details_reinforcement_learning}
Across all reinforcement learning tasks, batched optimization is conducted by sampling $4$ prompts and generating candidate solutions in groups of size $G=8$, i.e., a batch of 32 responses is formed and evaluated. To avoid out-of-memory during training, we vectorize computation across groups but treat each prompt as a separate sequential mini-batch. For all experiments, we adopt an ensemble size of $n=16$, with a LoRA rank of $r=128$ for the branches in \name{}. As in the previous tasks, the LoRA $B$ matrices are initialized to zero. \grpo{} is fine-tuned directly without any additional LoRA branches.
To implement \name{}, we bootstrap the \grpo{} objective (with both importance sampling and length rescaling enabled) and retain its clipping parameter of $\epsilon=0.2$. We apply a consistent KL-divergence penalty coefficient of $\beta=1.0\times 10^{-3}$ against the base reference model across all methods and include it in the baseline calculation. We do not use additional entropy regularization ($\alpha=0.0$). The shared backbone and the ensemble branches are optimized using the AdamW optimizer with hyperparameters $\beta_1=0.9$, $\beta_2=0.999$, no weight decay, and a gradient norm clipping with maximal value $1.0$.

For both \grpo{} and \name{}, the learning rate of the backbone is set to $1.0\times 10^{-5}$, except for the off-policy variants ($T=4$) trained on AIME1983-2024, where a backbone learning rate of $2.5\times 10^{-6}$ is adopted. In all cases the learning rate was optimized for \grpo{} via grid search and retained for \name{} without further tuning. For \name{}, the learning rate of the ensemble branches is tuned via grid search and set to $5.0 \times 10^{-4}$ except for the off-policy variants ($T=4$) trained on AIME1983-2024, where it is set to $1.25 \times 10^{-4}$. As for black-box optimization, the large learning rate of the LoRA heads is crucial to achieve calibrated policy diversification.

We optimize a Qwen3-8B model with generation temperature $1.0$ and generation limit set to 2048 new tokens. Thinking mode is disabled, and the model is prompted with the following system instructions:
\begin{verbatim}    You are an expert mathematician. You must reason step-by-step and place 
    your final answer within \boxed{}. Be concise.\end{verbatim}

\paragraph{AIME 2026} Models are trained over a total of $4800$ interactions, equating to $150$ gradient updates and $20$ passes through the dataset. In the spirit of classical contextual bandits where there is no train-test split, the model is also validated on AIME 2026.
    
\paragraph{AIME 1983-2024} Models are trained over a total of $8192$ interactions, which corresponds to $256$ gradient updates and $1.1$ passes through the dataset. We report accuracy both on the training set (i.e., the observed rewards) as well as on a separate held-out validation set (AIME2026).

\begin{table}[h]
\resizebox{\linewidth}{!}{
\renewcommand{\arraystretch}{1.2}
\begin{tabular}{lcccc}
    \multirow{2}{*}{Hyperparameter} & \multicolumn{2}{c}{AIME 2026 (T=1,4) \& AIME 1983-2024 (T=1)} & \multicolumn{2}{c}{AIME 1983-2024 (T=4)} \\
    \cmidrule(lr){2-3} \cmidrule(lr){4-5}
    & \name{} & \grpo{} & \name{} & \grpo{}\\
    \cmidrule(lr) {1-1} \cmidrule(lr){2-3} \cmidrule(lr){4-5}
    Question Batch Size & 4 & 4 & 4 & 4\\
    Group Size & 8 & 8 & 8 & 8 \\
    GRPO Variant & GSPO & GSPO & GSPO & GSPO \\
    Importance Clipping & 0.2 & 0.2 & 0.2 & 0.2 \\
    \cmidrule(lr) {1-1} \cmidrule(lr){2-3} \cmidrule(lr){4-5}
    Backbone LR & $10^{-5}$ & $10^{-5}$ & $2.5 \!\times \!10^{-6}$ & $2.5 \!\times \!10^{-6}$\\
    LoRA LR & $5 \!\times\! 10^{-4}$ & N/A & $ 1.25 \!\times\! 10^{-4}$ & N/A\\
    AdamW $\beta_1$ & 0.9 & 0.9 & 0.9 & 0.9\\
    AdamW $\beta_2$ & 0.999 & 0.999 & 0.999 & 0.999\\
    Gradient Clip Norm & 1.0 & 1.0 & 1.0 & 1.0\\
    \cmidrule(lr) {1-1} \cmidrule(lr){2-3} \cmidrule(lr){4-5}
    Weight Decay & 0 & 0 & 0 & 0\\
    KL-regularization $\beta$ & $10^{-3}$ & $10^{-3}$ & $10^{-3}$ & $10^{-3}$\\
    Entropy regularization $\alpha$ & 0 & 0 & 0 & 0\\
    \cmidrule(lr) {1-1} \cmidrule(lr){2-3} \cmidrule(lr){4-5}
    Poisson Bootstrapping & True & False & True & False\\
    Ensemble Size & 16 & 1 & 16 & 1\\
    LoRA Rank & 128 & 0 & 128 & 0\\
    \cmidrule(lr) {1-1} \cmidrule(lr){2-3} \cmidrule(lr){4-5}
    Temperature & 1.0 & 1.0 & 1.0 & 1.0\\
    Max Output Tokens & 2048 & 2048 & 2048 & 2048\\
\end{tabular}
}
\vspace{5pt}
\caption{Hyperparameters for reinforcement learning. The backbone learning rates are tuned for \grpo{} and then directly transferred to \name{}. Since LoRA is not used for \grpo{}, the LoRA learning rates are tuned for \name{}. More precisely, for AIME 2026 and AIME 1983-2024 (T=1), we swept the backbone LR across $5 \!\times\! 10^{-6}, 1 \!\times\! 10^{-5}, 2 \!\times\! 10^{-5}$ and the LoRA LR across $2  \!\times\! 10^{-4}$, $5 \!\times\! 10^{-4}$, $1  \!\times\! 10^{-3}$. For AIME 1983-2024 (T=4), we swept the backbone LR across $1\!\times\! 10^{-6}$, $2\!\times\! 10^{-6}$, $2.5\!\times\! 10^{-6}$, $3\!\times\! 10^{-6}$, $4\!\times\! 10^{-6}$ and upon noticing that the optimal learning rate is $4$ times smaller than for $T=1$, we also decreased the LoRA LR by a factor $4$. Appendix~\ref{sec:additional_results} ablates other hyper-parameters.}
\label{tab:sweep_hyperparameters_rl}
\end{table}

\newpage
\section{Proofs}\label{sec:proofs}

\subsection{Proof sketches}
\begin{proof}[Proof sketch of Proposition~\ref{prop:poets_gradients}]
The proof begins by expanding the reward difference $r(a-\rho) - r_\pi(a-\rho)$ and showing that the intractable normalization constant $\log Z$ naturally cancels out, allowing the difference to be rewritten strictly in terms of the soft rewards $\tilde{r}_\pi(a) - \tilde{r}_\pi(\rho)$. By differentiating the squared loss with respect to the model parameters $\theta$, the chain rule produces the scaled gradient of the log-policy offset by the expected baseline. Finally, taking the outer expectation over the (empirical) baseline distribution $\rho$ eliminates the subtracted baseline term entirely because the average advantage evaluates to zero, perfectly recovering the final average gradient form.
\end{proof}

\begin{proof}[Proof sketch of Theorem~\ref{thm:cumulative_regret_bound}]
The proof heavily leverages the core Thompson Sampling identity, which establishes that, conditioned on the history, the expected objective value of the optimal policy under the true reward is equal to the expected objective value of the sampled policy under the sampled reward. Substituting this identity into the expected single-step soft regret, the KL-divergence and entropy regularization terms perfectly cancel each other out because they depend only on the policy and are independent of the reward functions. This cancellation reduces the expected regularized regret purely to the expected reward estimation error at the evaluation points, which is then bounded by $\mathcal O (\sqrt{T\gamma_T})$ using standard Gaussian process confidence bounds from prior works.
\end{proof}

\subsection{Formal proofs}

\begin{prop_}[Restatement of Proposition~\ref{prop:poets_gradients}]
    Define the soft reward $\tilde{r}_\pi(a) := r(a) + \beta \log \pi_{ref}(a) - (\beta+\alpha) \log \pi(a)$. Let $\pi$ be parameterized by $\theta$. Then the gradients of the \name{} loss $L_{poets}^\rho$ are given by
    \begin{align}
        \nabla_\theta L_{poets}^\rho(a, \pi) & = -2(\beta+\alpha) \cdot   \tilde{r}_\pi(a-\rho) (\nabla_\theta \log \pi(a) - \mathbb E_{a^\prime \sim \rho}[\nabla_\theta \log \pi(a^\prime)])\\
        \mathbb E_{a \sim \rho} [\nabla_\theta L_{poets}^\rho(a, \pi)] & = -2(\beta+\alpha) \cdot \mathbb E_{a \sim \rho} \big[\tilde r_\pi(a-\rho) \nabla_\theta \log \pi(a)\big].
    \end{align}
\end{prop_}

\begin{proof}[Proof of Proposition~\ref{prop:poets_gradients}]
We start off by expanding the definition of $r(a-\rho)$ and $r_\pi(a-\rho)$ and canceling $\log Z$:
    \begin{align*}
        r(a-\rho) - r_\pi(a-\rho) & = r(a) - r_\pi(a) - (r(\rho) - r_\pi(\rho))\\ 
        & = r(a) - ((\beta+\alpha) \log \pi(a) - \beta \log \pi_{ref}(a))\\
        & \quad - \mathbb E_{a^\prime \sim \rho}[r(a^\prime) - ((\beta+\alpha) \log \pi(a^\prime) - \beta \log \pi_{ref}(a^\prime))]\\
        & = \tilde r_\pi(a) - \tilde r_\pi(\rho) = \tilde{r}_\pi(a-\rho).
    \end{align*}
    Using this identity, we next calculate the derivative of $L^{\rho}_{poets}(a, \pi)$ step by step:
    \begin{align*}
        \nabla_\theta L_{poets}^\rho(a, \pi) & = \nabla_\theta \big(r(a-\rho)-r_\pi(a-\rho)\big)^2 = -2(r(a-\rho)-r_\pi(a-\rho)) \nabla_\theta r_\pi(a-\rho)\\
        & = -2\tilde{r}_\pi(a-\rho) \nabla_\theta r_\pi(a-\rho)\\
        & = -2(\beta+\alpha) \tilde{r}_\pi(a-\rho) (\nabla_\theta \log \pi(a) - \mathbb E_{a^\prime \sim \rho}[\nabla_\theta \log \pi(a^\prime)])
    \end{align*}
    For the second result, we take the outer expectation over $\rho$, i.e., the reward difference is taken with respect to the average reward on the entire dataset (or batch) $\{a_i\}$, formally described by the empirical distribution $\rho(a) = \tfrac{1}{G} \sum_{i=1}^G \delta_{a_i, a}$. The result then follows from $\mathbb E_{a \sim \rho}[\tilde{r}_\pi(a-\rho) \mathbb E_{a^\prime \sim \rho}[\nabla_\theta \log \pi(a^\prime)]] = \mathbb E_{a \sim \rho}[\tilde{r}_\pi(a-\rho)] \cdot \mathbb E_{a^\prime \sim \rho}[\nabla_\theta \log \pi(a^\prime)] = 0 \cdot \mathbb E_{a^\prime \sim \rho}[\nabla_\theta \log \pi(a^\prime)] = 0$
\end{proof}

\begin{thm_}[Detailed version of Theorem~\ref{thm:cumulative_regret_bound}]
    Suppose a Gaussian process prior over the true reward $r$. Then, KL-regularized Thompson sampling, i.e., observing $r$ at $a_t \sim \pi_t$ for $\pi_t := \arg\max_\pi J(\pi, \tilde r_t)$ where $\tilde r_t \sim \mathbb P[r \mid \mathcal H_{t-1}]$ given observation history $\mathcal H_{t-1}$, has Bayesian cumulative (soft) regret
    \begin{equation*}
        \textstyle \sum_{t=1}^T \mathbb E_{r,\pi_t} [\max_{\pi}J(\pi, r) - J(\pi_t, r)] \leq \beta \sqrt{C_\eta} \cdot \sqrt{T \gamma_T}
        \in {\mathcal O}(\sqrt{T \gamma_T}).
    \end{equation*}
    Here, $\beta := 1 + \sqrt{2 \log (2 \cdot |r|) + 2}$, $C_\eta:= {2}/{\ln(1+\eta^{-2})}$ for $\eta$ upper bounding the standard deviation of independent additive Gaussian observation noise, and $\gamma_T$ is the maximal information gain~\citep{srinivas2009gaussian} of the Gaussian prior over $r$. If $r$ is linear in $d$ features, then $\gamma_T \in \tilde{\mathcal O}(d)$.
\end{thm_}

\begin{proof}[Proof of Theorem~\ref{thm:cumulative_regret_bound}]
    Let $r$ be the true reward, let $\mathcal{H}_{t-1}$ denote the observation history up to time $t$, and let $\tilde r_t \sim \mathbb P[r \mid \mathcal H_{t-1}]$. Define the optimal policy $\pi^* := \arg\max_\pi J(\pi, r)$. By definition, KL-regularized Thompson sampling draws from $\pi_t := \arg\max_\pi J(\pi, \tilde r_t)$. Now, note that $\pi^*, r \overset{d}{=} \pi_t, \tilde r_t$ given $\mathcal H_{t-1}$. As a result, the (soft) reward conditioned on $\mathcal H_{t-1}$ is also identically distributed:
    \begin{equation*}
        \mathbb E [J(\pi^*, r) \mid \mathcal{H}_{t-1}] = \mathbb E [J(\pi_t, \tilde{r}_t) \mid \mathcal{H}_{t-1}]
    \end{equation*}
    Defining $\Delta_t := J(\pi^*, r) - J(\pi_t, r)$, we substitute this identity into the expected single-step regret:
    \begin{align*}
        \mathbb E [\Delta_t \mid \mathcal{H}_{t-1}] = \mathbb E [J(\pi^*, r) - J(\pi_t, r) \mid \mathcal{H}_{t-1}] = \mathbb E [J(\pi_t, \tilde{r}_t) - J(\pi_t, r) \mid \mathcal{H}_{t-1}]
    \end{align*}
    We expand the objective function $J$ for the same policy $\pi_t$ under both reward functions $\tilde{r}_t$ and $r$:
    \begin{align*}
        \mathbb E [\Delta_t \mid \mathcal{H}_{t-1}] & = \textstyle \mathbb E \Big[ \big( \sum_a \tilde{r}_t(a) \pi_t(a)  - \beta D_{KL}(\pi_t || \pi_{ref}) + \alpha H[\pi_t] \big) \\
        & \textstyle \quad - \big( \mathbb \sum_a r(a) \pi_t(a) - \beta D_{KL}(\pi_t || \pi_{ref}) + \alpha H[\pi_t] \big) \mid \mathcal{H}_{t-1} \Big]
    \end{align*}
    Note that here $\pi_t$ is itself a random quantity that depends on $\tilde r_t$. Because the KL-divergence and entropy terms depend solely on the policy $\pi_t$ and are independent of the reward, they cancel out:
    \begin{equation*}
        \textstyle \mathbb E [\Delta_t \mid \mathcal{H}_{t-1}] = \mathbb E \big[ \sum_a (\tilde{r}_t(a) - r(a)) \cdot \pi_t(a) \mid \mathcal{H}_{t-1} \big]
    \end{equation*}
    The expected regularized regret simplifies entirely to the expected reward estimation error at evaluation points. We next bound this term using standard confidence bounds over $T$ steps, in particular via the maximal information gain~\citep{srinivas2009gaussian}. More precisely, according to Lemma~\ref{lem:uniform_bound}, it holds that
    \begin{equation*}
        \textstyle \mathbb E [\Delta_t \mid \mathcal{H}_{t-1}] \leq \mathbb E \big[ \sum_a |\tilde{r}_t(a) - r(a)| \cdot \pi_t(a) \mid \mathcal{H}_{t-1} \big] \leq \beta \sqrt{\mathbb E [\sum_a \sigma_t^2(a) \cdot \pi_t(a) \mid \mathcal{H}_{t-1} \big]}.
    \end{equation*}
    Here, the last step used that under Gaussian priors $r, \tilde r_t$ are i.i.d. element-wise $\sigma_t$-subgaussian given observation history $\mathcal H_{t-1}$, where $\sigma_t(a)$ is the posterior standard deviation for the reward of action $a$. Next, we apply Jensen's inequality and Cauchy-Schwarz across the sequence dimension to obtain
    \begin{align*}
        & \textstyle \sum_{t=1}^T \mathbb E_{r,\pi_t} [\max_{\pi}J(\pi, r) - J(\pi_t, r)] \textstyle = \sum_{t=1}^T \mathbb E[\Delta_t] = \sum_{t=1}^T \mathbb E[\mathbb E [\Delta_t \mid \mathcal{H}_{t-1}]]\\
   \leq & \textstyle \sum_{t=1}^T \mathbb E \big[\beta \sqrt{\mathbb E [\sum_a \sigma_t^2(a) \cdot \pi_t(a) \mid \mathcal{H}_{t-1}]}\big] \leq \beta \sum_{t=1}^T 1 \cdot \sqrt{\mathbb E [\sum_a \sigma_t^2(a) \cdot \pi_t(a)]}\\
   \leq & \textstyle \beta \sqrt{T \cdot \sum_{t=1}^T \mathbb E[\sum_a \sigma_t^2(a) \cdot \pi_t(a)]} = \beta \sqrt{T \cdot \mathbb E [ \sum_{t=1}^T \sigma_t^2(a_t)]} \text{ where } a_t \sim \pi_t.
    \end{align*}
    To finish the proof, we apply Lemma~\ref{lem:information_gain_for_gaussians}, which results in the familiar ${\mathcal O}(\sqrt{T \gamma_T})$ scaling. Note that by rescaling the reward, the technical condition $\sigma_1(a) \leq 1 \ \forall a$ holds without loss of generality. For bounds on $\gamma_T$, including the linear case, we refer to~\cite{srinivas2009gaussian}.
\end{proof}

\subsection{Lemmas}

\begin{lem}[]\label{lem:uniform_bound}
    Let $r, r^\prime$ be i.i.d., element-wise $\sigma_x$-subgaussian, and let $x \sim p_r \in \{1, \ldots, |r|\}$ for any distribution $p_r$ parameterized by $r$. Then 
    \begin{align*}
        \mathbb E[|r_x-\mu_x|] & \leq \sqrt{\mathbb E[\sigma_{x}^2] \cdot (2 \log (2|r|)+2)} \text{ and }\\
        \mathbb E[|r_x-r^\prime_x|] & \leq \beta \cdot \sqrt{\mathbb E[\sigma_{x}^2]}.
    \end{align*}
    for $\beta := 1 + \sqrt{2 \log(2\cdot|r|) + 2}$.
\end{lem}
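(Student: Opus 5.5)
The plan is to control the random index $x \sim p_r$, which may depend on $r$, by a maximal inequality over the finitely many coordinates. After that, only the coordinate-wise subgaussian tails and Cauchy--Schwarz are needed. We read ``element-wise $\sigma_x$-subgaussian'' as: each coordinate $r_x$ has mean $\mu_x$, and $r_x-\mu_x$ is $\sigma_x$-subgaussian, with $\sigma_x$ deterministic given the conditioning (as in the theorem, where $\sigma_t$ is $\mathcal H_{t-1}$-measurable). Write $K := |r|$ and $Z_y := |r_y-\mu_y|/\sigma_y$ for $y \in \{1,\ldots,K\}$. Each $Z_y$ has tail $\mathbb P[Z_y > u] \le 2e^{-u^2/2}$.

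For the first inequality, write $|r_x-\mu_x| = \sigma_x Z_x$ and apply Cauchy--Schwarz:
\[
\mathbb E[\sigma_x Z_x] \le \sqrt{\mathbb E[\sigma_x^2]}\,\sqrt{\mathbb E[Z_x^2]} .
\]
Since $x$ may depend on $r$, we cannot bound $\mathbb E[Z_x^2]$ by a single-coordinate moment. Instead we bound it pointwise by the maximum, $Z_x^2 \le \max_y Z_y^2$, which needs no independence between coordinates. We then estimate
\[
\mathbb E[\max_y Z_y^2] = \int_0^\infty \mathbb P[\max_y Z_y^2 > s]\,ds \le u_0 + \int_{u_0}^\infty 2K e^{-s/2}\,ds = u_0 + 4K e^{-u_0/2}.
\]
Choosing $u_0 = 2\log(2K)$ gives $2\log(2K)+2$, which is exactly the constant in the statement. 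This step is the main obstacle, because the dependence of $x$ on $r$ is what forces the $\log|r|$ factor. The peeling or union-bound integral above is where the constant has to come out exactly right. If the tail constant $2$ were off, we would redo the choice of $u_0$, but the structure stays the same.

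For the second inequality, use the triangle inequality $|r_x-r'_x| \le |r_x-\mu_x| + |r'_x-\mu_x|$. The first term is handled by the bound just proved. For the second term, use that $r'$ is an independent copy, so we can condition on $(r,x)$. Given $(r,x)$, the variable $r'_x-\mu_x$ is $\sigma_x$-subgaussian with $x$ fixed, so $\mathbb E[|r'_x-\mu_x| \mid r,x] \le \sqrt{\mathbb E[(r'_x-\mu_x)^2 \mid r,x]} \le \sigma_x$, using the variance bound $\le \sigma_x^2$ implied by subgaussianity. Taking expectations and applying Jensen gives $\mathbb E[\sigma_x] \le \sqrt{\mathbb E[\sigma_x^2]}$. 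Summing the two terms yields $(1+\sqrt{2\log(2K)+2})\sqrt{\mathbb E[\sigma_x^2]} = \beta\sqrt{\mathbb E[\sigma_x^2]}$.

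In the application in the proof of Theorem~\ref{thm:cumulative_regret_bound}, the index $x$ is drawn from $\pi_t$, which depends on $\tilde r_t$, and the roles are matched accordingly: the sampled reward $\tilde r_t$ plays the part of $r$ in the lemma, and the true reward plays the part of $r'$. These two are conditionally i.i.d. given $\mathcal H_{t-1}$, so the asymmetric bound (max-inequality for the index-generating copy, plain variance bound for the independent copy) is exactly what is needed.
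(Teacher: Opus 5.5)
Your proposal is correct and follows the paper's proof essentially step for step: Cauchy--Schwarz on $\sigma_x Z_x$, the domination $Z_x^2 \le \max_y Z_y^2$ followed by the tail-integral/union bound, and then the triangle inequality with the independence of $r'$ from $x$ bounding the second term by $\sqrt{\mathbb E[\sigma_x^2]}$. Your split at $u_0 = 2\log(2|r|)$ is the same computation as the paper's $\int_0^\infty \min(1, 2|r|e^{-t/2})\,dt$, and your conditioning on $(r,x)$ plus Jensen spells out what the paper compresses into $\mathbb E[(Z'_x)^2] \le 1$.
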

\begin{proof}
    We generalize the proof given by \cite{menet2026codingagentsenvironmentinteraction} from deterministic $x = f(r)$ to distributions $x \sim p_r$. Define $Z_z = (r_z - \mu_z) / \sigma_z$. Since $r_z - \mu_z$ is $\sigma_z$-subgaussian, $Z_z$ is $1$-subgaussian. Cauchy-Schwarz then gives, $\mathbb E[|r_x - \mu_x|] = \mathbb E [\sigma_{x} | Z_x|] \leq \sqrt{\mathbb E[\sigma_{x}^2] \mathbb E[Z_x^2]}$. Now, $Z_x^2 \leq 
    \max_{1 \leq j \leq |r|} Z_j^2$, so the tail integral formula for expectations with union bound gives $\textstyle \mathbb E[Z_x^2] \leq \mathbb E[\max_{1 \leq j \leq |r|} Z_j^2] = \int_0^\infty\! \mathbb P[\max_{1\leq j \leq |r|} Z_j^2 \geq t] dt \leq \int_0^\infty \!\min(1, 2|r| e^{-t/2}) dt =  2 \log (2|r|)+2$. The first result then follows immediately. For the second result, apply the triangle inequality $\mathbb E[|r_x-r^\prime_x|] \leq \mathbb E[|r_x-\mu_x|] + \mathbb E[|r_x^\prime-\mu_x|]$, bound the first term as above, and the second by $\sqrt{\mathbb E[\sigma_{x}^2]}$, using that $r^\prime \perp x$ and thus $\mathbb E[(Z_x^\prime)^2] \leq 1$.
\end{proof}

\begin{lem}\label{lem:information_gain_for_gaussians}
    Let $r \sim \mathcal N(\mu, \Sigma)$ be a multivariate Gaussian with uniformly bounded standard deviations $\sigma_1(x) \leq 1\ \forall x$ that is consecutively evaluated at locations $(x_t)_{t=1}^T$ with observations $y_t = r({x_t}) + \varepsilon_t$. The noise is i.i.d. $\varepsilon_t \sim \mathcal N(0, \eta^2_t)$ with $\eta_{t} \leq \eta\ \forall t$. Then, the maximum information gain $\gamma_T$ upper bounds the aggregated predictive variances $\sigma_t(x) := \mathrm{Var}[r(x) \mid y_1, \ldots, y_{t-1}]$ at the evaluation locations, i.e.,
    \begin{equation*}
        \textstyle \gamma_T:= \max_{x_1, \ldots, x_T} I(y_{1:T}; r) \geq I(y_{1:T}; r) \geq \underbrace{\tfrac{\ln(1+\eta^{-2})}{2}}_{=: 1/C_{\eta}} \sum_{t=1}^T \sigma_t^2(x_t).
    \end{equation*}
\end{lem}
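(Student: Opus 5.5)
The plan is to prove Lemma~\ref{lem:information_gain_for_gaussians} (the final statement) through the standard chain-rule decomposition of mutual information for Gaussian observations, following \citep{srinivas2009gaussian}.

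\textbf{Chain rule.} First write
\begin{equation*}
I(y_{1:T}; r) = \sum_{t=1}^T I(y_t; r \mid y_{1:t-1}),
\end{equation*}
with the locations $x_t$ treated as fixed, or as chosen predictably given the past. Conditioned on $y_{1:t-1}$, the quantity $r(x_t)$ is Gaussian with variance $\sigma_t^2(x_t)$. The paper's notation writes $\sigma_t$ for the variance, but I will use $\sigma_t^2$ for the posterior variance, as in the theorem proof.

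\textbf{Per-step term.} Since $y_t = r(x_t) + \varepsilon_t$ with independent Gaussian noise,
\begin{equation*}
I(y_t; r \mid y_{1:t-1}) = \tfrac12 \ln\big(1 + \eta_t^{-2}\sigma_t^2(x_t)\big).
\end{equation*}
This follows from $H[y_t \mid y_{1:t-1}] - H[y_t \mid r, y_{1:t-1}]$, the difference of two Gaussian entropies. Because $\eta_t \le \eta$, this is at least $\tfrac12\ln\big(1+\eta^{-2}\sigma_t^2(x_t)\big)$. If the $x_t$ are chosen adaptively, the same identity holds conditionally, and the statement holds in expectation or pathwise for the realized sequence.

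\textbf{Linearize the logarithm.} This is the key step, and it uses the concavity bound $\ln(1+cs) \ge s\ln(1+c)$ for $s \in [0,1]$. The bound holds because $s \mapsto \ln(1+cs)$ is concave, so it lies above its chord between $s=0$ and $s=1$. Posterior variances are nonincreasing, so $\sigma_t^2(x_t) \le \sigma_1^2(x_t) \le 1$. Applying the bound with $c = \eta^{-2}$ gives
\begin{equation*}
\tfrac12\ln\big(1+\eta^{-2}\sigma_t^2(x_t)\big) \ge \tfrac{\ln(1+\eta^{-2})}{2}\,\sigma_t^2(x_t).
\end{equation*}
Summing over $t$ yields the right-hand inequality. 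The left-hand inequality $\gamma_T \ge I(y_{1:T};r)$ holds trivially by the definition of the maximum.

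\textbf{Main obstacle.} The only delicate point is the bookkeeping in the adaptive case, where $x_t \sim \pi_t$ depends on past data. I would handle it by conditioning on the realized $x_{1:T}$. Given the locations, the observations are jointly Gaussian, so the chain rule holds exactly with those realized locations. The maximal information gain then dominates every realization, and this is all Theorem~\ref{thm:cumulative_regret_bound} requires after taking expectations.
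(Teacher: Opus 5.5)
Your proposal is correct and follows essentially the same route as the paper. The paper applies the chain rule to $H[y_{1:T}]$ and you apply it to $I(y_{1:T};r)$, which is the same computation giving $I(y_{1:T};r)=\tfrac12\sum_{t}\ln\bigl(1+\eta_t^{-2}\sigma_t^2(x_t)\bigr)$, and both of you then linearize each term with the chord bound for the concave map $s\mapsto\ln(1+cs)$ on $[0,1]$, justified by $\sigma_t^2(x_t)\le\sigma_1^2(x_t)\le 1$. Replacing $\eta_t$ by $\eta$ before linearizing makes explicit the step $C_{\eta_t}\le C_\eta$, which the paper leaves implicit. For adaptive designs, the paper's justification is cleaner than yours: $\sigma_t(x_t)$ depends only on $x_{1:t-1}$ and not on the observed values, so you simply evaluate the fixed-design information gain at the realized locations. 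Literally conditioning on $x_{1:T}$ would generally make the observations non-Gaussian, so your phrase ``given the locations, the observations are jointly Gaussian'' should be replaced by this argument.
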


\begin{proof}
    We repeat the proof given by~\cite{menet2025lite}, which generalizes the proof of \citeauthor{srinivas2009gaussian} \citeyearpar{srinivas2009gaussian} from homoscedastic to heteroscedastic noise. First, note that the expression on the right hand side is well-defined, because $\sigma_t(x_t)$ only depends on the observation locations $x_1, \ldots, x_{t-1}$, but not on the observed value. Moreover, since Gaussian conditioning cannot increase the variance, $\sigma_t(x_t) \leq \sigma_1(x_t) \leq 1\ \forall x$. Next, note that $y_{1:T} \mid r$ is a multivariate normal with independent components of variance $\eta_{1}^2, \ldots, \eta_{T}^2$, thus
    \begin{align*}
        I(y_{1:T}; r) & = H[y_{1:T}] - H[y_{1:T} | r]\\
        & \textstyle = H[y_{1:T}] - \tfrac{1}{2} \sum_{t=1}^T \ln (2 \pi e \eta^2_t).
    \end{align*}
    Furthermore, one may decompose 
    \begin{align*}
        H[y_{1:T}] & = H[y_{1:{T\text{-}1}}] + H[y_{T}|y_{1:{T\text{-}1}}]\\
        & = H[y_{1:{T\text{-}1}}] + \tfrac{1}{2}\ln(2\pi e (\eta^2_{T} + \sigma_{T}^2(x_T))),
    \end{align*}
    using that $y_{T} | y_{1:{T\text{-}1}}$ is Gaussian with variance $\eta^2_{T} + \sigma_{T}^2(x_T)$. Recursively expanding then results in 
    $$\textstyle I(y_{1:T}; r) = \frac{1}{2} \sum_{t=1}^T \ln(1+\eta^{\text{-}2}_{t} \sigma_{t}^2(x_t)).$$
    Finally, by assumption $\sigma_{t}(x_t) \in [0,1]$, allowing to lower bound each summand
    \begin{align*}
    \sigma_{t}^2(x_t) \leq  \tfrac{1}{2}\ln(1+\eta^{\text{-}2}_{t}\sigma_{t}^2(x_t))\underbrace{\tfrac{2}{\ln(1+\eta^{\text{-}2}_{t})}}_{=:C_{\eta_{{t}}}}.
    \end{align*}
    The lower bound follows from $g(\sigma_{t}^2(x_t)) := \ln(1+\eta^{\text{-}2}_t \sigma_{t}(x_t)^2) - \sigma_{t}^2(x_t) \ln(1+\eta^{\text{-}2}_{t}) \geq 0\ \forall \sigma_{t}(x_t) \in [0,1]$, which in turn follows from $g(0) = 0$, $g(1) = 0$, and concavity of $g$.
\end{proof}

\newpage
\section{Broader impact, assets, and licenses}

\subsection{Broader impact}\label{sec:broader_impact}
This work proposes POETS, a general-purpose framework for uncertainty-aware exploration in reinforcement learning and black-box optimization. By improving sample efficiency and mitigating premature convergence, POETS may positively impact scientific discovery settings where evaluations are costly, such as protein design, quantum circuit synthesis, and other combinatorial optimization problems. In LLM post-training, the method can improve robustness by maintaining exploration in high-uncertainty regimes and enabling effective use of experience replay.

As a foundational algorithmic contribution, POETS does not introduce new datasets, release trained models, or directly target downstream applications. Nevertheless, like other methods that improve the efficiency of reward-driven optimization, it could indirectly accelerate the development of more capable models, amplifying both beneficial and harmful uses depending on deployment. POETS does not address reward misspecification, fairness, or alignment; these concerns remain the responsibility of system designers through careful reward construction, evaluation, and deployment safeguards.

\subsection{Assets and licenses}\label{sec:licenses}
In this work, we utilize several open-source models, datasets, and software libraries. We acknowledge the creators of these assets and respect their licensing terms:

\begin{itemize}
    \item \textbf{Models:} We utilize the \textbf{Qwen3-8B} and \textbf{Qwen3-Embedding-0.6B} models \citep{yang2025qwen3technicalreport, qwen3embedding}, which are released by Alibaba Cloud under the Apache 2.0 License. We also use \textbf{ProtGPT2} \citep{ferruz2022protgpt2}, built on the GPT-2 architecture~\citep{radford2019language}, which is also made available under the Apache 2.0 License.
    \item \textbf{Datasets:} We evaluate our reinforcement learning approach using problems from the American Invitational Mathematics Examination (\textbf{AIME}). The AIME problems are the copyrighted property of the Mathematical Association of America (MAA) and are used here strictly for non-commercial academic research purposes.
    \item \textbf{Software \& Frameworks:} Our framework is implemented in \textbf{PyTorch} \citep{paszke2019pytorchimperativestylehighperformance}, licensed under the Modified BSD License. We use the \textbf{Hugging Face Transformers} library \citep{wolf2020huggingfacestransformersstateoftheartnatural} and \textbf{Qiskit} \citep{javadiabhari2024quantumcomputingqiskit}, both of which are licensed under the Apache 2.0 License. 
\end{itemize}

\newpage
\section*{NeurIPS Paper Checklist}

\begin{enumerate}

\item {\bf Claims}
    \item[] Question: Do the main claims made in the abstract and introduction accurately reflect the paper's contributions and scope?
    \item[] Answer: \answerYes{} 
    \item[] Justification: The abstract and introduction clearly outline the claims about introducing the \name{} framework, its theoretical cumulative regret bounds, and its empirical performance. These claims are directly supported by the proofs in Appendix~\ref{sec:proofs} and the results in the experimental section.
    \item[] Guidelines:
    \begin{itemize}
        \item The answer \answerNA{} means that the abstract and introduction do not include the claims made in the paper.
        \item The abstract and/or introduction should clearly state the claims made, including the contributions made in the paper and important assumptions and limitations. A \answerNo{} or \answerNA{} answer to this question will not be perceived well by the reviewers. 
        \item The claims made should match theoretical and experimental results, and reflect how much the results can be expected to generalize to other settings. 
        \item It is fine to include aspirational goals as motivation as long as it is clear that these goals are not attained by the paper. 
    \end{itemize}

\item {\bf Limitations}
    \item[] Question: Does the paper discuss the limitations of the work performed by the authors?
    \item[] Answer: \answerYes{} 
    \item[] Justification: Section~\ref{sec:conclusion_and_limitations} discusses the limitations of \name{} in the context of future work. The computational efficiency of our method is benchmarked in detail in Section~\ref{sec:computational_overhead}.
    \item[] Guidelines:
    \begin{itemize}
        \item The answer \answerNA{} means that the paper has no limitation while the answer \answerNo{} means that the paper has limitations, but those are not discussed in the paper. 
        \item The authors are encouraged to create a separate ``Limitations'' section in their paper.
        \item The paper should point out any strong assumptions and how robust the results are to violations of these assumptions (e.g., independence assumptions, noiseless settings, model well-specification, asymptotic approximations only holding locally). The authors should reflect on how these assumptions might be violated in practice and what the implications would be.
        \item The authors should reflect on the scope of the claims made, e.g., if the approach was only tested on a few datasets or with a few runs. In general, empirical results often depend on implicit assumptions, which should be articulated.
        \item The authors should reflect on the factors that influence the performance of the approach. For example, a facial recognition algorithm may perform poorly when image resolution is low or images are taken in low lighting. Or a speech-to-text system might not be used reliably to provide closed captions for online lectures because it fails to handle technical jargon.
        \item The authors should discuss the computational efficiency of the proposed algorithms and how they scale with dataset size.
        \item If applicable, the authors should discuss possible limitations of their approach to address problems of privacy and fairness.
        \item While the authors might fear that complete honesty about limitations might be used by reviewers as grounds for rejection, a worse outcome might be that reviewers discover limitations that aren't acknowledged in the paper. The authors should use their best judgment and recognize that individual actions in favor of transparency play an important role in developing norms that preserve the integrity of the community. Reviewers will be specifically instructed to not penalize honesty concerning limitations.
    \end{itemize}

\item {\bf Theory assumptions and proofs}
    \item[] Question: For each theoretical result, does the paper provide the full set of assumptions and a complete (and correct) proof?
    \item[] Answer: \answerYes{} 
    \item[] Justification: The paper presents its theoretical results clearly in Section~\ref{sec:method} and Section~\ref{sec:theory} of the main text. In Appendix~\ref{sec:proofs}, both proof sketches and formal mathematical proofs are provided.
    \item[] Guidelines:
    \begin{itemize}
        \item The answer \answerNA{} means that the paper does not include theoretical results. 
        \item All the theorems, formulas, and proofs in the paper should be numbered and cross-referenced.
        \item All assumptions should be clearly stated or referenced in the statement of any theorems.
        \item The proofs can either appear in the main paper or the supplemental material, but if they appear in the supplemental material, the authors are encouraged to provide a short proof sketch to provide intuition. 
        \item Inversely, any informal proof provided in the core of the paper should be complemented by formal proofs provided in appendix or supplemental material.
        \item Theorems and Lemmas that the proof relies upon should be properly referenced. 
    \end{itemize}

    \item {\bf Experimental result reproducibility}
    \item[] Question: Does the paper fully disclose all the information needed to reproduce the main experimental results of the paper to the extent that it affects the main claims and/or conclusions of the paper (regardless of whether the code and data are provided or not)?
    \item[] Answer: \answerYes{} 
    \item[] Justification: Appendix~\ref{sec:experimental_details} entails a comprehensive "Experimental Details" section, which outlines exact hyperparameter settings, prompt templates, and evaluation environments. Additionally, the entire codebase is available in the supplementary materials for reproducibility.
    \item[] Guidelines:
    \begin{itemize}
        \item The answer \answerNA{} means that the paper does not include experiments.
        \item If the paper includes experiments, a \answerNo{} answer to this question will not be perceived well by the reviewers: Making the paper reproducible is important, regardless of whether the code and data are provided or not.
        \item If the contribution is a dataset and\slash or model, the authors should describe the steps taken to make their results reproducible or verifiable. 
        \item Depending on the contribution, reproducibility can be accomplished in various ways. For example, if the contribution is a novel architecture, describing the architecture fully might suffice, or if the contribution is a specific model and empirical evaluation, it may be necessary to either make it possible for others to replicate the model with the same dataset, or provide access to the model. In general. releasing code and data is often one good way to accomplish this, but reproducibility can also be provided via detailed instructions for how to replicate the results, access to a hosted model (e.g., in the case of a large language model), releasing of a model checkpoint, or other means that are appropriate to the research performed.
        \item While NeurIPS does not require releasing code, the conference does require all submissions to provide some reasonable avenue for reproducibility, which may depend on the nature of the contribution. For example
        \begin{enumerate}
            \item If the contribution is primarily a new algorithm, the paper should make it clear how to reproduce that algorithm.
            \item If the contribution is primarily a new model architecture, the paper should describe the architecture clearly and fully.
            \item If the contribution is a new model (e.g., a large language model), then there should either be a way to access this model for reproducing the results or a way to reproduce the model (e.g., with an open-source dataset or instructions for how to construct the dataset).
            \item We recognize that reproducibility may be tricky in some cases, in which case authors are welcome to describe the particular way they provide for reproducibility. In the case of closed-source models, it may be that access to the model is limited in some way (e.g., to registered users), but it should be possible for other researchers to have some path to reproducing or verifying the results.
        \end{enumerate}
    \end{itemize}

\item {\bf Open access to data and code}
    \item[] Question: Does the paper provide open access to the data and code, with sufficient instructions to faithfully reproduce the main experimental results, as described in supplemental material?
    \item[] Answer: \answerYes{} 
    \item[] Justification: The entire code base, complete with experimental configuration files and environment files specifying required dependencies, is provided in the supplementary materials and will be made publicly available upon acceptance.
    \item[] Guidelines:
    \begin{itemize}
        \item The answer \answerNA{} means that paper does not include experiments requiring code.
        \item Please see the NeurIPS code and data submission guidelines (\url{https://neurips.cc/public/guides/CodeSubmissionPolicy}) for more details.
        \item While we encourage the release of code and data, we understand that this might not be possible, so \answerNo{} is an acceptable answer. Papers cannot be rejected simply for not including code, unless this is central to the contribution (e.g., for a new open-source benchmark).
        \item The instructions should contain the exact command and environment needed to run to reproduce the results. See the NeurIPS code and data submission guidelines (\url{https://neurips.cc/public/guides/CodeSubmissionPolicy}) for more details.
        \item The authors should provide instructions on data access and preparation, including how to access the raw data, preprocessed data, intermediate data, and generated data, etc.
        \item The authors should provide scripts to reproduce all experimental results for the new proposed method and baselines. If only a subset of experiments are reproducible, they should state which ones are omitted from the script and why.
        \item At submission time, to preserve anonymity, the authors should release anonymized versions (if applicable).
        \item Providing as much information as possible in supplemental material (appended to the paper) is recommended, but including URLs to data and code is permitted.
    \end{itemize}

\item {\bf Experimental setting/details}
    \item[] Question: Does the paper specify all the training and test details (e.g., data splits, hyperparameters, how they were chosen, type of optimizer) necessary to understand the results?
    \item[] Answer: \answerYes{} 
    \item[] Justification: Appendix~\ref{sec:experimental_details} meticulously specifies all necessary experimental settings, including the use of the AdamW optimizer, exact learning rates for shared backbones and LoRA branches, gradient clipping parameters, and prompt instructions.
    \item[] Guidelines:
    \begin{itemize}
        \item The answer \answerNA{} means that the paper does not include experiments.
        \item The experimental setting should be presented in the core of the paper to a level of detail that is necessary to appreciate the results and make sense of them.
        \item The full details can be provided either with the code, in appendix, or as supplemental material.
    \end{itemize}

\item {\bf Experiment statistical significance}
    \item[] Question: Does the paper report error bars suitably and correctly defined or other appropriate information about the statistical significance of the experiments?
    \item[] Answer: \answerYes{} 
    \item[] Justification: To ensure statistical significance, the experiments report the mean and (its) standard error for all experimental metrics across 25 random seeds.
    \item[] Guidelines:
    \begin{itemize}
        \item The answer \answerNA{} means that the paper does not include experiments.
        \item The authors should answer \answerYes{} if the results are accompanied by error bars, confidence intervals, or statistical significance tests, at least for the experiments that support the main claims of the paper.
        \item The factors of variability that the error bars are capturing should be clearly stated (for example, train/test split, initialization, random drawing of some parameter, or overall run with given experimental conditions).
        \item The method for calculating the error bars should be explained (closed form formula, call to a library function, bootstrap, etc.)
        \item The assumptions made should be given (e.g., Normally distributed errors).
        \item It should be clear whether the error bar is the standard deviation or the standard error of the mean.
        \item It is OK to report 1-sigma error bars, but one should state it. The authors should preferably report a 2-sigma error bar than state that they have a 96\% CI, if the hypothesis of Normality of errors is not verified.
        \item For asymmetric distributions, the authors should be careful not to show in tables or figures symmetric error bars that would yield results that are out of range (e.g., negative error rates).
        \item If error bars are reported in tables or plots, the authors should explain in the text how they were calculated and reference the corresponding figures or tables in the text.
    \end{itemize}

\item {\bf Experiments compute resources}
    \item[] Question: For each experiment, does the paper provide sufficient information on the computer resources (type of compute workers, memory, time of execution) needed to reproduce the experiments?
    \item[] Answer: \answerYes{} 
    \item[] Justification: The paper explicitly states that all experiments were conducted on a single NVIDIA H200 GPU with 141GB of memory. Moreover, Appendix~\ref{sec:experimental_details} states the total compute hours including preliminary experiments tallying a total of $57\ 000$ GPU hours.
    \item[] Guidelines:
    \begin{itemize}
        \item The answer \answerNA{} means that the paper does not include experiments.
        \item The paper should indicate the type of compute workers CPU or GPU, internal cluster, or cloud provider, including relevant memory and storage.
        \item The paper should provide the amount of compute required for each of the individual experimental runs as well as estimate the total compute. 
        \item The paper should disclose whether the full research project required more compute than the experiments reported in the paper (e.g., preliminary or failed experiments that didn't make it into the paper). 
    \end{itemize}
    
\item {\bf Code of ethics}
    \item[] Question: Does the research conducted in the paper conform, in every respect, with the NeurIPS Code of Ethics \url{https://neurips.cc/public/EthicsGuidelines}?
    \item[] Answer: \answerYes{} 
    \item[] Justification: The research revolves around the development of a theoretical and algorithmic exploration framework for LLMs and does not contain methods, applications, or subject matter that inherently violate the NeurIPS Code of Ethics.
    \item[] Guidelines:
    \begin{itemize}
        \item The answer \answerNA{} means that the authors have not reviewed the NeurIPS Code of Ethics.
        \item If the authors answer \answerNo, they should explain the special circumstances that require a deviation from the Code of Ethics.
        \item The authors should make sure to preserve anonymity (e.g., if there is a special consideration due to laws or regulations in their jurisdiction).
    \end{itemize}

\item {\bf Broader impacts}
    \item[] Question: Does the paper discuss both potential positive societal impacts and negative societal impacts of the work performed?
    \item[] Answer: \answerYes{} 
    \item[] Justification: A dedicated broader impacts section is provided in the Appendix (Section~\ref{sec:broader_impact}).
    \item[] Guidelines:
    \begin{itemize}
        \item The answer \answerNA{} means that there is no societal impact of the work performed.
        \item If the authors answer \answerNA{} or \answerNo, they should explain why their work has no societal impact or why the paper does not address societal impact.
        \item Examples of negative societal impacts include potential malicious or unintended uses (e.g., disinformation, generating fake profiles, surveillance), fairness considerations (e.g., deployment of technologies that could make decisions that unfairly impact specific groups), privacy considerations, and security considerations.
        \item The conference expects that many papers will be foundational research and not tied to particular applications, let alone deployments. However, if there is a direct path to any negative applications, the authors should point it out. For example, it is legitimate to point out that an improvement in the quality of generative models could be used to generate Deepfakes for disinformation. On the other hand, it is not needed to point out that a generic algorithm for optimizing neural networks could enable people to train models that generate Deepfakes faster.
        \item The authors should consider possible harms that could arise when the technology is being used as intended and functioning correctly, harms that could arise when the technology is being used as intended but gives incorrect results, and harms following from (intentional or unintentional) misuse of the technology.
        \item If there are negative societal impacts, the authors could also discuss possible mitigation strategies (e.g., gated release of models, providing defenses in addition to attacks, mechanisms for monitoring misuse, mechanisms to monitor how a system learns from feedback over time, improving the efficiency and accessibility of ML).
    \end{itemize}
    
\item {\bf Safeguards}
    \item[] Question: Does the paper describe safeguards that have been put in place for responsible release of data or models that have a high risk for misuse (e.g., pre-trained language models, image generators, or scraped datasets)?
    \item[] Answer: \answerNA{} 
    \item[] Justification: The contribution of this work is the \name{} algorithmic framework; it does not involve the release of new language models, image generators, or scraped datasets
    \item[] Guidelines:
    \begin{itemize}
        \item The answer \answerNA{} means that the paper poses no such risks.
        \item Released models that have a high risk for misuse or dual-use should be released with necessary safeguards to allow for controlled use of the model, for example by requiring that users adhere to usage guidelines or restrictions to access the model or implementing safety filters. 
        \item Datasets that have been scraped from the Internet could pose safety risks. The authors should describe how they avoided releasing unsafe images.
        \item We recognize that providing effective safeguards is challenging, and many papers do not require this, but we encourage authors to take this into account and make a best faith effort.
    \end{itemize}

\item {\bf Licenses for existing assets}
    \item[] Question: Are the creators or original owners of assets (e.g., code, data, models), used in the paper, properly credited and are the license and terms of use explicitly mentioned and properly respected?
    \item[] Answer: \answerYes{} 
    \item[] Justification: All owners of code, data, and models are properly credited and the respective licenses are respected and explicitly mentioned within a dedicated Section~\ref{sec:licenses} in the Appendix.
    \item[] Guidelines:
    \begin{itemize}
        \item The answer \answerNA{} means that the paper does not use existing assets.
        \item The authors should cite the original paper that produced the code package or dataset.
        \item The authors should state which version of the asset is used and, if possible, include a URL.
        \item The name of the license (e.g., CC-BY 4.0) should be included for each asset.
        \item For scraped data from a particular source (e.g., website), the copyright and terms of service of that source should be provided.
        \item If assets are released, the license, copyright information, and terms of use in the package should be provided. For popular datasets, \url{paperswithcode.com/datasets} has curated licenses for some datasets. Their licensing guide can help determine the license of a dataset.
        \item For existing datasets that are re-packaged, both the original license and the license of the derived asset (if it has changed) should be provided.
        \item If this information is not available online, the authors are encouraged to reach out to the asset's creators.
    \end{itemize}

\item {\bf New assets}
    \item[] Question: Are new assets introduced in the paper well documented and is the documentation provided alongside the assets?
    \item[] Answer: \answerYes{} 
    \item[] Justification: \name{} is a novel framework for sample-efficient reinforcement learning. The full codebase is provided in the supplementary materials, including proper documentation.
    \item[] Guidelines:
    \begin{itemize}
        \item The answer \answerNA{} means that the paper does not release new assets.
        \item Researchers should communicate the details of the dataset\slash code\slash model as part of their submissions via structured templates. This includes details about training, license, limitations, etc. 
        \item The paper should discuss whether and how consent was obtained from people whose asset is used.
        \item At submission time, remember to anonymize your assets (if applicable). You can either create an anonymized URL or include an anonymized zip file.
    \end{itemize}

\item {\bf Crowdsourcing and research with human subjects}
    \item[] Question: For crowdsourcing experiments and research with human subjects, does the paper include the full text of instructions given to participants and screenshots, if applicable, as well as details about compensation (if any)? 
    \item[] Answer: \answerNA{} 
    \item[] Justification: The experimental setup relies entirely on automated black-box optimization benchmarks (such as FAQ refinement, Protein Search, and Quantum Circuit Design) and existing mathematical datasets. Therefore, the research does not involve crowdsourcing or human subjects.
    \item[] Guidelines:
    \begin{itemize}
        \item The answer \answerNA{} means that the paper does not involve crowdsourcing nor research with human subjects.
        \item Including this information in the supplemental material is fine, but if the main contribution of the paper involves human subjects, then as much detail as possible should be included in the main paper. 
        \item According to the NeurIPS Code of Ethics, workers involved in data collection, curation, or other labor should be paid at least the minimum wage in the country of the data collector. 
    \end{itemize}

\item {\bf Institutional review board (IRB) approvals or equivalent for research with human subjects}
    \item[] Question: Does the paper describe potential risks incurred by study participants, whether such risks were disclosed to the subjects, and whether Institutional Review Board (IRB) approvals (or an equivalent approval/review based on the requirements of your country or institution) were obtained?
    \item[] Answer: \answerNA{} 
    \item[] Justification: Because the paper's methodology does not involve research with human subjects, IRB approvals or equivalent reviews are not applicable.
    \item[] Guidelines:
    \begin{itemize}
        \item The answer \answerNA{} means that the paper does not involve crowdsourcing nor research with human subjects.
        \item Depending on the country in which research is conducted, IRB approval (or equivalent) may be required for any human subjects research. If you obtained IRB approval, you should clearly state this in the paper. 
        \item We recognize that the procedures for this may vary significantly between institutions and locations, and we expect authors to adhere to the NeurIPS Code of Ethics and the guidelines for their institution. 
        \item For initial submissions, do not include any information that would break anonymity (if applicable), such as the institution conducting the review.
    \end{itemize}

\item {\bf Declaration of LLM usage}
    \item[] Question: Does the paper describe the usage of LLMs if it is an important, original, or non-standard component of the core methods in this research? Note that if the LLM is used only for writing, editing, or formatting purposes and does \emph{not} impact the core methodology, scientific rigor, or originality of the research, declaration is not required.
    \item[] Answer: \answerYes{} 
    \item[] Justification: LLMs are the central focus of the optimization method proposed in this research. Thus, their usage is comprehensively documented, including system instructions, generation prompts, and fine-tuning details throughout the main text and Appendix~\ref{sec:experimental_details}.
    \item[] Guidelines:
    \begin{itemize}
        \item The answer \answerNA{} means that the core method development in this research does not involve LLMs as any important, original, or non-standard components.
        \item Please refer to our LLM policy in the NeurIPS handbook for what should or should not be described.
    \end{itemize}

\end{enumerate}

\end{document}